\theoremstyle{plain}
\newtheorem{theorem}{Theorem}[section]
\newtheorem{lemma}[theorem]{Lemma}
\theoremstyle{definition}
\newtheorem{definition}[]{Definition}
\theoremstyle{remark}
\newcommand{\secref}[1]{Section~\ref{#1}}
\newcommand{\figref}[1]{Figure~\ref{#1}}
\newcommand{\subfig}[1]{\textit{#1}}
\newcommand{\tabref}[1]{Table~\ref{#1}}
\newcommand{\algoref}[1]{Algorithm~\ref{#1}}
\newcommand{\ie}{\textrm{i.e.}\xspace}
\newcommand{\eg}{\textrm{e.g.}\xspace}
\newcommand{\algo}{RHIRL\xspace}
\newcommand{\actualControl}[1]{v_{#1}}
\newcommand{\dkl}{D_{\sss \textrm{KL}}}
\newcommand{\state}[1]{x_{#1}}
\newcommand{\scost}{\ensuremath{g}\xspace}
\newcommand{\inputControl}[1]{u_{#1}}
\newcommand{\noiseMatrix}{\Sigma}
\newcommand{\stateCostFunc}{S}
\newcommand{\controlSeqCostFunc}{J}
\newcommand{\optimalActControlSeq}{V^*}
\newcommand{\partition}{Z}
\newcommand{\traj}{\tau}
\newcommand{\sss}{\scriptscriptstyle}
\newcommand{\inverseTemp}{\lambda}
\newcommand{\baseControlInput}{\ensuremath{U_{\sss B}}\xspace}
\newcommand{\objective}[2]{{\mathcal{L}}(#2;#1)}
\newcommand{\obj}{\ensuremath{\mathcal{L}}}
\newcommand{\expertDemo}{\mathcal{D}}
\newcommand{\pinit}{\ensuremath{\mu}\xspace}
\newcommand{\stateinit}{x_0}
\newcommand{\thetaE}{\theta_{\sss E}}
\newcommand{\expertStateDistribution}{p_{\sss \textrm{\upshape E}}}
\newcommand{\RHCStateDistribution}{p_{\sss \textrm{\upshape RHC}}}
\newcommand*{\@rowstyle}{}
\newcommand*{\rowstyle}[1]{
  \gdef\@rowstyle{#1}%
  \@rowstyle\ignorespaces%
}
\newtheorem*{rep@theorem}{\rep@title}
\newcommand{\newreptheorem}[2]{%
\newenvironment{rep#1}[1]{%
 \def\rep@title{#2 \ref{##1}}%
 \begin{rep@theorem}}%
 {\end{rep@theorem}}}
\title{Receding Horizon Inverse Reinforcement Learning}
\author{
Yiqing Xu$^{1}$
  \qquad
  Wei Gao$^{1}$
  \qquad
  David Hsu$^{1,2}$  \vspace{0.25cm}\\
  $^1$School of Computing  \\
  $^2$Smart Systems Insitute\\
  National University of Singapore\\
  \texttt{\{xuyiqing,gaowei90,dyhsu\}@comp.nus.edu.sg} \\
}
\begin{document}
\maketitle

\begin{abstract}
  Inverse reinforcement learning (IRL) seeks to infer a cost function that explains the underlying goals and  preferences of  expert demonstrations.
This paper presents \textit{receding-horizon inverse reinforcement learning} (\algo), an IRL algorithm for high-dimensional, noisy, continuous systems with black-box dynamic models. \algo addresses two key challenges of IRL: scalability and robustness. To handle high-dimensional continuous systems, \algo  matches the induced optimal trajectories with expert demonstrations \textit{locally} in a receding horizon manner and ``stitches'' together the local solutions to learn the cost; it thereby avoids the ``curse of dimensionality''.
This contrasts with  earlier algorithms that match with expert demonstrations \textit{globally} over the entire high-dimensional state space. To be robust against imperfect expert demonstrations and control noise, \algo learns a state-dependent cost function ``disentangled'' from system dynamics under mild conditions. Experiments on benchmark tasks show that  \algo outperforms several leading IRL algorithms in most instances. 
We also prove that the cumulative error of \algo grows linearly with the task duration. 
\end{abstract}

\section{Introduction}

\label{introduction}
Reinforcement learning (RL) has made exciting progress in a range of complex tasks, including real-time game playing \citep{DBLP:journals/corr/MnihKSGAWR13},  visuo-motor control of robots~\citep{viereck2017learning}, and many other works. The success, however,  often hinges on a carefully crafted cost function \citep{KnoAll21,NgHar99}, which is a major impediment to the wide adoption of RL in  practice. Inverse reinforcement learning (IRL) \citep{Ng00algorithmsfor} addresses this need by learning a cost function that explains the underlying goals and preferences of expert demonstrations. This work focuses on two key challenges in IRL, \textit{scalability} and \textit{robustness}.

Classic IRL algorithms commonly consist of two nested loops. The inner loop
approximates the optimal control policy for a hypothesized cost function,
while the outer loop updates the cost function by comparing the behavior of the induced policy
with expert demonstrations. The inner loop must solve the (forward) reinforcement
learning or optimal control problem, which is in itself a challenge for
complex high-dimensional systems. Many interesting ideas have been proposed
for IRL, including, \eg, maximum entropy learning~\citep{DeepMaxEnt,MaxEntIRL},
guided cost learning~\citep{GCL}, and adversarial
learning~\citep{AIRL}. See~\figref{fig:comparison} for illustrations. They try
to match a \textit{globally} optimal approximate policy with expert
demonstrations over the entire system state space or a sampled approximation
of it.  This is impractical for high-dimensional continuous systems and is a
fundamental impediment to scalability.
Like RL, IRL suffers from the same ``curse of dimensionality''.
To scale up, \emph{receding-horizon IRL}  (\algo) computes  locally optimal policies with  receding horizons
rather than a globally optimal policy and then matches them with expert
demonstrations \textit{locally} in  succession (\figref{fig:comparison}\subfig
d).  The local approximation and matching substantially mitigate the impact of
high-dimensional space and improve the sample efficiency of \algo, at the cost
of a local rather than a global solution. So \algo trades off optimality for
scalability and provides an alternative to current  approaches.

Another important concern of IRL is noise in expert demonstrations and system
control. Human experts may be imperfect for various reasons and provide good,
but still suboptimal demonstrations. Further, the system may fail to execute
the commanded actions accurately because of control noise. We want to learn a
cost function that captures the expert's intended actions rather than the
imperfectly executed actions.  While earlier work has investigated the
question of  learning from sub-optimal or failed demonstrations \citep{brown2019extrapolating, failureInverseRL,
  wu2019imitation}, there is a subtle, but critical difference between
(i) demonstrations intended to be suboptimal and (ii) optimal demonstrations
corrupted by noise. The existing work \citep{brown2019extrapolating, failureInverseRL,
  wu2019imitation}  addresses (i); RHIRL
addresses (ii). To learn the true intentions from the noise corrupted
demonstrations, \algo relies on a simplifying assumption: the cost function is
linearly separable with two components, one state-dependent and one
control-dependent. Many interesting systems in practice satisfy the
assumption, at least, approximately \citep{Pow07}. \algo then learns the
state-dependent cost component, which is disentangled from the system dynamics
\citep{AIRL} and agnostic to noise.

\section{Related Work}
\label{sec:related}

\begin{figure}
\vspace{-8px}
\captionsetup[subfigure]{font=scriptsize,labelfont=scriptsize,position=bottom,labelformat=empty}
\centering
\subfloat[(\subfig a)
\label{fig:comparison_a}]{\includegraphics[width=0.25\columnwidth]{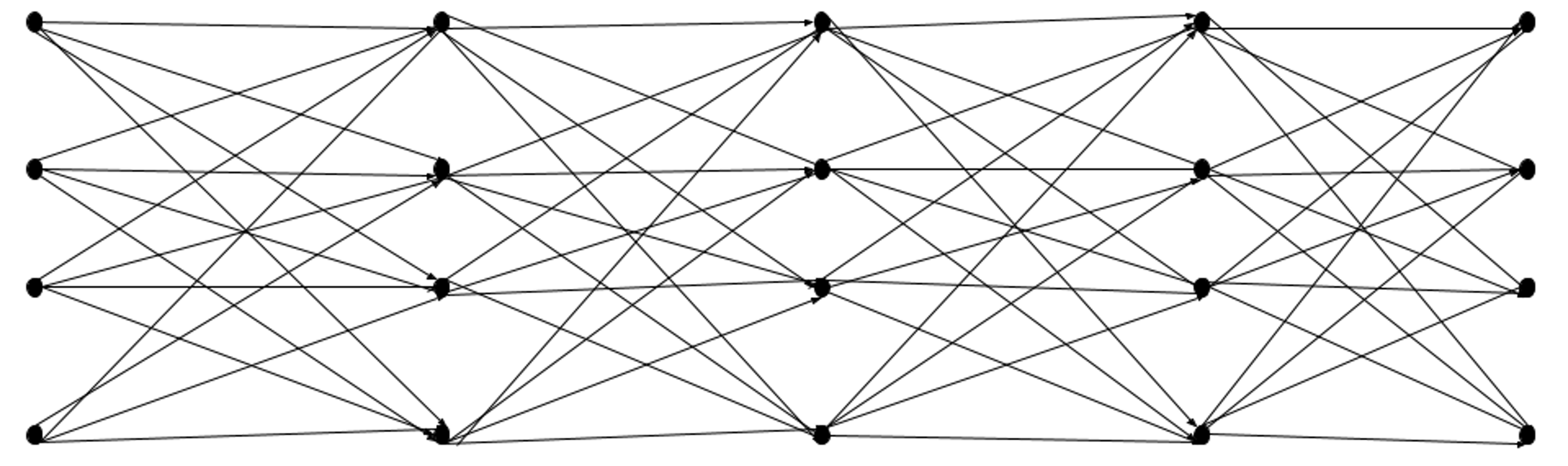}}
\subfloat[(\subfig b)\label{fig:comparison_b}]{\includegraphics[width=0.25\columnwidth]{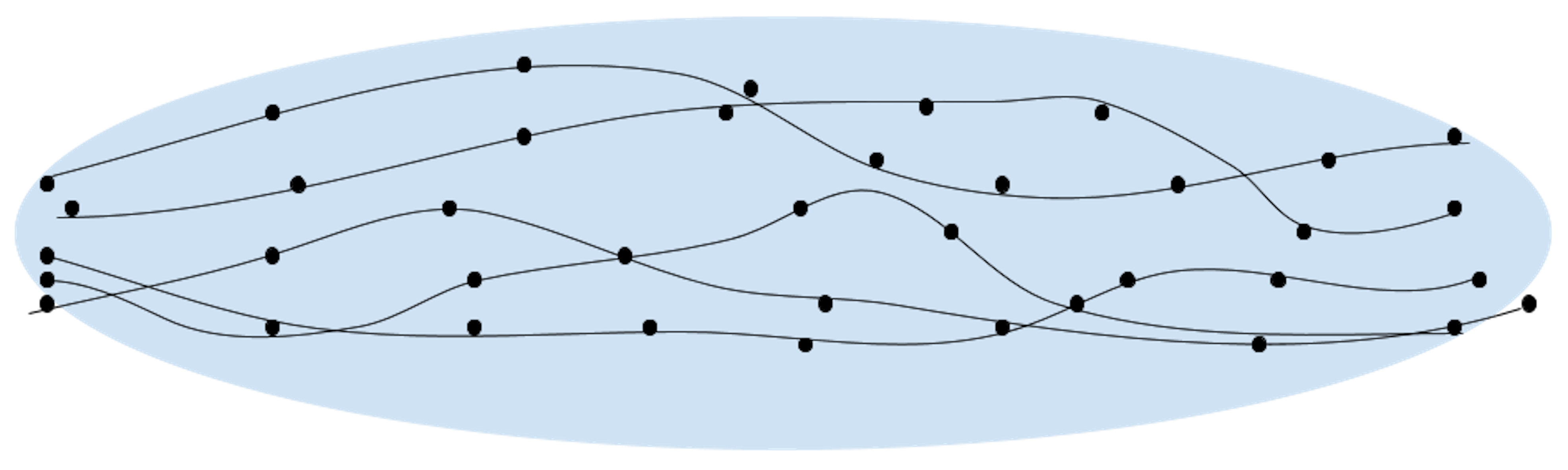}}
\subfloat[(\subfig c)\label{fig:comparison_c}]{\includegraphics[width=0.25\columnwidth]{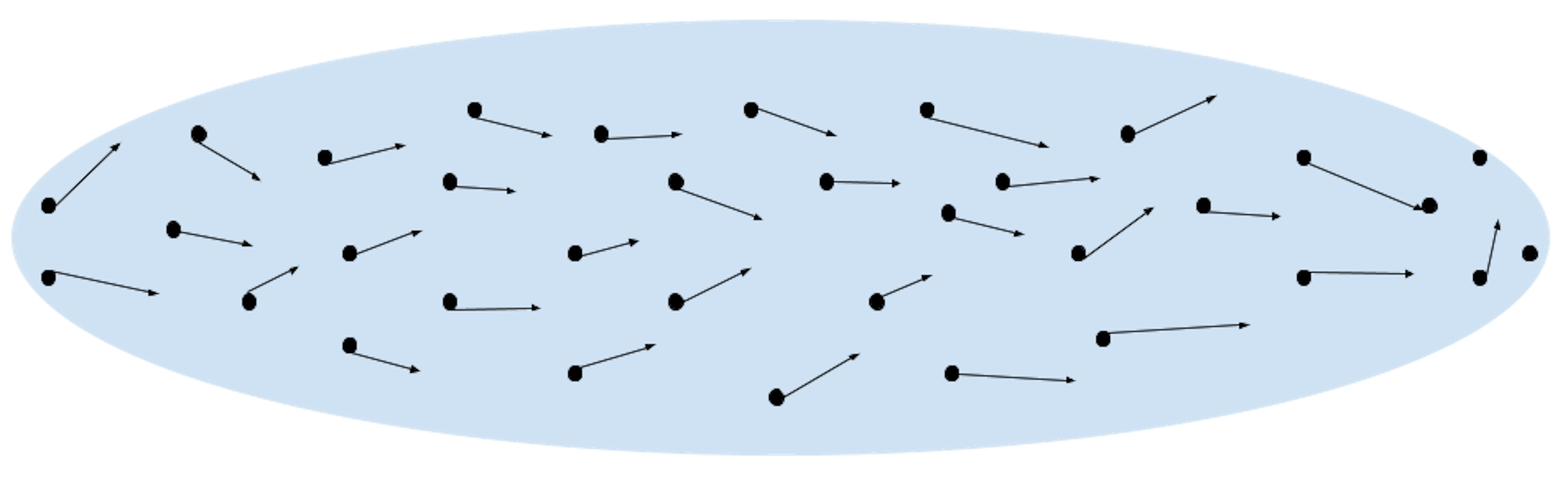}}
\subfloat[(\subfig d)\label{fig:comparison_d}]{\includegraphics[width=0.25\columnwidth]{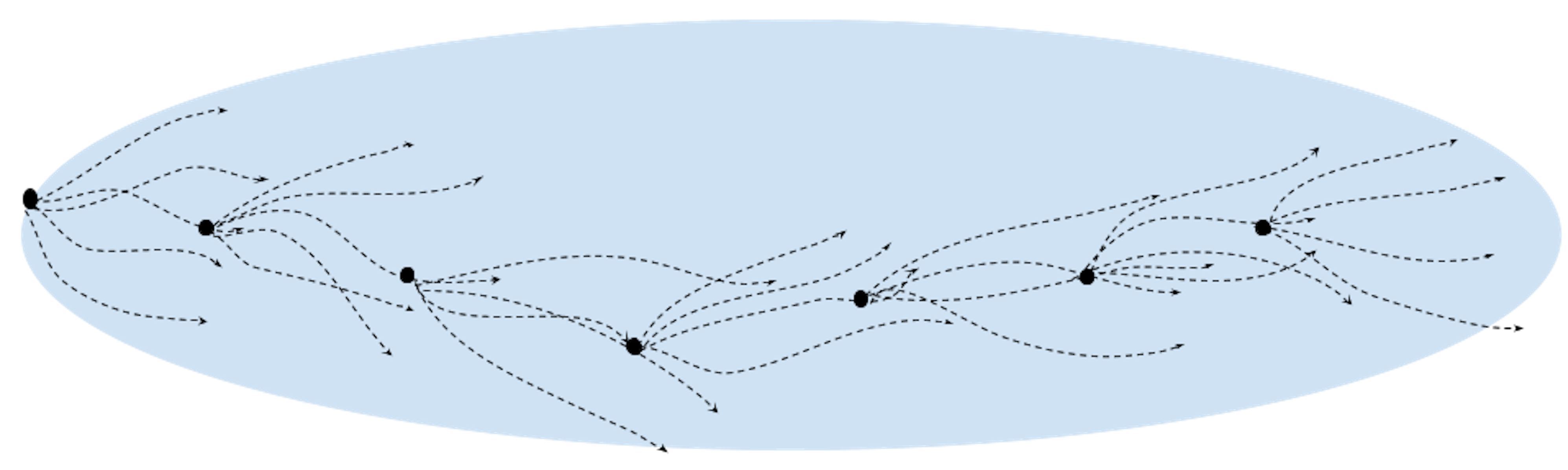}}
\vspace{-5px}
\caption{
A comparison of \algo and  selected IRL algorithms. They all try to match the policy induced by the learned cost  with  expert demonstrations.  (\subfig a) {MaxEnt}  matches the exact feature count over the entire system state space. (\subfig b) REIRL, GCL, and GAN-GCL match the approximate feature count over sampled state trajectories globally over the entire task duration.
(\subfig c)~GAIL, AIRL, ... discriminate the sampled state or state-action distributions.
(\subfig d) \algo 
matches control sequences \textit{locally} along demonstrated trajectories in a receding horizon manner.
}
\label{fig:comparison}
\vspace{-10px}
\end{figure}
IRL can be viewed as an indirect approach to imitation learning. It learns a
cost function, which induces an optimal policy whose behavior matches with expert
demonstrations. In contrast, behavior cloning (BC) is a direct approach.  It
assumes independence among all demonstrated state-action pairs and learns a
policy that maps states to actions through supervised learning of state-action
pairs from expert demonstrations.  The simplicity of BC is appealing. However,
it typically requires large amounts of data to learn well and suffers from
covariate shift~\citep{ross2011reduction}.  IRL is more data-efficient.
Further, it produces a cost function, which explains the expert demonstrations and potentially  transfers to other systems with different dynamics. These benefits, however, come at the expense of greater computational complexity. 

Classic IRL algorithms  learn a cost function iteratively in a double-loop: the  outer loop updates a hypothesized cost function, and  the inner loop solves the forward RL problem for an optimal policy and matches it with expert demonstrations. Various methods have been proposed  \citep{REIRL,GCL,DeepMaxEnt,MaxEntIRL}, but they all seek a globally optimal solution  over the entire state space (\figref{fig:comparison}\subfig a) \citep{DeepMaxEnt,MaxEntIRL} or the entire task duration (\figref{fig:comparison}\subfig b) \citep{REIRL,GCL}. As a result, they face significant challenges in scalability and must make simplifications, such as locally linear dynamics~\citep{GCL}. 
 Recent methods use 
the generative adversarial network (GAN)~\citep{GAN} to learn a discriminator that differentiates between  the state or state-action distribution  induced by the learned cost and that from expert demonstrations~\citep{SSRR,AIRL,FAIRL,GAIL,OPIRL,DAC,SMM,fIRL,EAIRL}. 
We view this global matching as a major obstacle to scalability. In addition, GAN training is challenging and faces difficulties with convergence.

\algo stands in between BC and the aforementioned IRL algorithms by trading
off optimality for scalability. BC performs local matching at each
demonstrated state-action pair, treating all of them independently. Most
existing IRL algorithms perform global matching over the entire state space or
sampled trajectories from it.
\algo follows the standard
IRL setup.  To tackle the challenge of scalability for high-dimensional
continuous systems, \algo borrows ideas from \emph{receding horizon
control} (RHC) \citep{RHL}. It solves for locally optimal control sequences with
receding horizons and learns the cost function by ``stitching'' together a
series of locally optimal solutions to match the global state distribution of
expert demonstrations (\figref{fig:comparison}\subfig d). Earlier work
has explored the idea of RHC in IRL \citep{BetweenImitationAndIntention}, but 
 relies on handcrafted reward features and is limited to   discrete, low-dimensional tasks.
More recent work learns reward features automatically \citep{CostMPC}. However,
it focuses on lane navigation for autonomous driving, exploting a known
analytic model of dynamics and noise-free, perfect expert demonstrations.
\algo  aims at general high-dimensional continuous tasks with noisey expert
demonstrations.

  Another challenge to IRL is  suboptimal expert demonstrations and system
  control noise. Several methods learn an auxiliary score or ranking  to
  reweigh the demonstrations, in order to approximate the underlying optimal
  expert distribution
  \citep{brown2019betterthandemonstrator,SSRR,wu2019imitation}. \algo does not
  attempt to reconstruct the optimal expert demonstrations. It explicitly
  models  the human errors in control actions as additive Gaussian noise and
  matches the noisy control with expert demonstrations, in order to learn from
  the intended, rather than the executed expert actions. Modelling the human
  error as additive Gaussian noise is a natural choice technically \citep{pmlr-v28-levine13,ofc,
    MI_via_MCE}, but compromises on realism. Human errors
  in sequential decision making may admit other structural regularities, as a
  result of planning, resource constraints, uncertainty, or bounded
  rationality\citep{10.2307/1914185}. Studying the specific forms of human
  errors in the IRL context  requires insights beyond the scope of our current
  work, but forms a promising direction for future investigation.

\section{Receding Horizon Inverse Reinforcement Learning}
\label{rhirl}
\subsection{Overview}
Consider a continuous dynamical system:
\begin{equation}
  \label{trans_func}
  x_{t+1} = f(x_t, v_t),
\end{equation}
where $\state{t} \in \mathcal{R}^{n}$ is the state,  $\actualControl{t} \in \mathcal{R}^{m}$ is the control  at time $t$, and the initial system state at $t=0$ follows a distribution \pinit.
To account for noise in expert demonstrations, we assume that $\actualControl{t}$ is a random variable following the Gaussian distribution $ \mathcal{N} ( v_t | \inputControl{t}, \Sigma)$, with mean $\inputControl{t}$ and covariance $\Sigma$.
 We can control  $\inputControl{t}$ directly, but not~$\actualControl{t}$, because of noise. The state-transition function $f$ captures the system dynamics. \algo represents $f$ as a black-box simulator and does not require its analytic form. Thus, we can accommodate arbitrary complex nonlinear dynamics. To simplify the presentation, we assume that the system dynamics is deterministic. We sketch the extension to stochastic dynamics at the end of the section, the full proof is given in Appendix \ref{Appendix:extension_stochastic}.

In RL, we are given a cost function  and want to find a control policy that minimizes the expected total cost over time under the dynamical system. In IRL, we are not given the cost function, but instead, a set $\expertDemo$ of expert demonstrations. Each demonstration is a  trajectory of states visited by the expert over time:  $\traj = (x_0, x_1, x_2, ... , x_{\sss {T}-1})$ for a  duration of  $T$ steps.

We assume that the expert chooses the actions to minimize an unknown cost function and want to recover this cost  from the demonstrations. Formally, suppose that the cost function is parameterized by $\theta$. \algo aims to learn a cost function that minimizes the loss $\objective{\expertDemo} {\theta}$, which measures
the difference between  the demonstration trajectories and the optimal control policy induced by the cost function with parameters~$\theta$.

\algo performs this minimization iteratively, using the gradient
$\partial \obj / \partial\theta $ to update $\theta$.  In iteration~$t$, let
$x_t$ be the system state at time $t$ and $\expertDemo_t$ be the set of expert
sub-trajectories starting at time $t$ and having a duration of maximum $K$
steps. We use the current cost to perform receding horizon control (RHC) at
$x_t$, with time horizon $K$, and then update the cost by comparing the
resulting state trajectory distribution with the demonstrations in
$\expertDemo_t$. Earlier IRL work 
requires the simulator to reset to the states from the initial state
distribution \citep{REIRL, GAN-GCL, GCL,  AIRL, GAIL, DAC}.
To perform receding horizon optimization, \algo requires a
simulator to reset to the initial state $x_t$ for the local control sequence
optimization at each time step $t = 0, … T-1$, a common setting in
optimal control (\eg, \citep{Williams2018RobustSB,MPPI,CostMPC}).
See \algoref{algo} for a sketch.
 
\begin{algorithm}
\caption{\algo}\label{algo}
\small
\begin{algorithmic}[1]
\State Initialize $\theta$ randomly.
\For {\texttt{$i= 1,2,3, ...$}}
\State Sample $\stateinit$ from \pinit. 
\State Initialize control sequence $U$ of length $K$ to $(0,0,\dots)$. 
\For{\texttt{$t = 0,1,2, ..., T-1$}}
\State Sample $M$ control sequences $V_j$ for $j=1,2,\ldots M$, according to $ \mathcal{N}(V | U,\Sigma)$. \label{a:sample}
\State Compute the importance weight $w_j$ for each $V_j$, using the state cost $S(V_j, x_t; \theta)$.~\Comment{See equation~\eqref{eqn:weights}}.\label{a:grad1}
\State Compute $ {\partial \over \partial \theta } \obj(\theta; \expertDemo_t, x_t)$ using $\expertDemo_t$ and $V_j$ with weight  $w_j$, for  $j=1, 2, \ldots, M$.~\Comment{See equation~\eqref{equ:grad}}.  \label{a:grad2}
\State $\theta \gets \theta -\alpha\, {\partial\obj \over \partial \theta}$. \label{a:update}  
\State $U \gets \sum_{j=1}^{M} w_j V_j$.\label{a:stateUpdate1}
\State Sample $v_t$  from $ \mathcal{N}(v| u,\Sigma )$, where $u$ is the first element in the control sequence $U$. 
\State $x_{t+1} \gets f(x_t, v_t)$. \label{a:stateUpdate2}
\State Remove $u$ from $U$. Append $0$  at the end of $U$. \label{a:controlUpdate}
\EndFor
\EndFor
\end{algorithmic}\label{algo}
\end{algorithm}

  First,  sample $M$ control sequences, each of length~$K$ (line~6).
  We assume that the covariance $\noiseMatrix$ is known. If it is unknown, we set it to be identity by default. For our experiment results reported in \tabref{tab:scalablity} and \tabref{tab:noise}, $\noiseMatrix$ is unknown and is approximated by a constant factor of the identity matrix (grid search is performed to determine this constant factor). We show through experiments that the learned cost function is robust over  different noise settings (\secref{sec:robustness}). 
  Next, we apply model-predictive path integral (MPPI) control~\citep{MPPI} at $x_t$. MPPI provides an analytic solution for the optimal control sequence distribution and the associated state sequence distribution, which allow us to estimate the gradient $\partial \obj / \partial\theta $ efficiently through importance sampling (lines~7--8) and update cost function parameters $\theta$ (line~9). Finally, we execute the computed optimal control (line 10--12) and update the mean control input for the next iteration (line~13).
  We would like to emphasize that we only use the simulator to sample rollouts and evaluate the current cost function. Fixed expert demonstrations $\expertDemo$ are given upfront. Unlike DAgger, we do not query the expert online during training. 

Our challenge is to uncover a cost function that captures the expert's intended controls $(\inputControl{0}, \inputControl{1}, \ldots, \inputControl{{\sss T}-1})$, even though they were not directly observed,  because of noise, and do so in a scalable and robust manner for high-dimensional, noisy systems. 

We develop three ideas: structuring the cost function, matching locally with expert demonstrations, and efficient computation of the gradient $\partial \obj / \partial\theta $, which are described next.

\subsection{Robust Cost Functions}
\label{sec:cost}
To learn a cost function robust against noise, we make a simplifying assumption that linearly separates the one-step cost into two components: a state cost  $\scost(x; \theta)$ parameterized by $\theta$ and a quadratic control cost $u^{T} \noiseMatrix^{-1} u$.  Despite the simplification, this cost function models a wide variety of interesting systems in practice~\citep{Pow07}. It allows \algo to learn a state cost $\scost(x; \theta)$, independent of  control noise (\secref{sec:grad}), and thus generalize over different noise distributions (\secref{sec:robustness}).

Suppose that $V=(v_0, v_1, \ldots, v_{{\sss K}-1})$ is a control sequence of length $K$,  conditioned on the input $U=(u_0, u_1, \ldots, u_{{\sss K}-1})$.
 We  apply $V$ to the dynamical system in \eqref{trans_func} with start state  $\stateinit$ and obtain a state sequence $\traj=(\stateinit, x_1, \ldots, x_{\sss K})$  with $x_k = f(x_{k-1}, v_{k-1})$ for $k=1,2, \ldots, K$. 
Define the total cost of $V$ as 
\begin{equation}\label{equ:cost}
    \controlSeqCostFunc(V, \stateinit; \theta) =\sum_{k = 0}^{K} \scost(x_k;\theta) + \sum_{k=0}^{K-1}\frac{\inverseTemp}{2} 
    \inputControl{k}^{\intercal}
    \noiseMatrix^{-1} \inputControl{k}, 
\end{equation}
where $\inverseTemp \geq 0$ is a constant weighting the relative importance between the state  and  control costs. For convenience, define also the total state cost of $V$ as
\begin{equation}\label{equ:statecost}
\stateCostFunc(V, \stateinit;\theta) = \sum_{k = 0}^{K} \scost(x_k;\theta).
\end{equation}
While $\stateCostFunc$ is defined in terms of the control sequence $V$, it only depends on the corresponding state trajectory~$\traj$. This is very useful, as the training data contains state and not control sequences explicitly.

\subsection{Local Control Sequence Matching}
\label{sec:matching}

To minimize the loss  $\obj$, 
each iteration of \algo  applies RHC  with time horizon $K$ under the current cost parameters $\theta$ and computes  locally  optimal control sequences  of length $K$. In contrast, classic IRL algorithms, such as MaxEnt~\citep{MaxEntIRL}, perform global optimization over the entire task duration $T$ in the inner loop.
While RHC sacrifices global optimality, it is much more scalable and enables \algo to handle high-dimensional continuous systems. We use the hyperparameter $K$  to trade off optimality and scalability. 

Specifically, we use MPPI~\citep{MPPI} to solve for an optimal control sequence distribution  at the current start state $x_t$ in  iteration $t$. 
The main result of MPPI suggests that the optimal control sequence distribution $Q$ minimizes the ``free energy'' of the dynamical system and  this  free energy can be calculated from the cost of the state trajectory  under $Q$.  Mathematically, the probability density~$p_{\sss Q}(\optimalActControlSeq)$  can be 
expressed as a function of the state cost $\stateCostFunc(V, x_t;\theta) $,  with respect to a Gaussian ``base'' distribution $B(\baseControlInput, \noiseMatrix)$ that depends on the control cost:
\begin{equation}
\label{equ:optimal_dis}
    p_{\sss Q}(\optimalActControlSeq|\baseControlInput,  \noiseMatrix, x_t;\theta) = {1 \over \partition}\,{p_{\sss B}(\optimalActControlSeq|\baseControlInput, \noiseMatrix) \exp\bigl(-\frac{1}{\inverseTemp} S(\optimalActControlSeq, x_t;\theta)\bigr)}, 
\end{equation}
where 
$Z$ is the partition function. 
 For the quadratic control cost in \eqref{equ:cost}, we have $\baseControlInput = (0, 0, \ldots)$~\citep{MPPI}. 
Intuitively, the expression in \eqref{equ:optimal_dis} says that the  probability density of $\optimalActControlSeq$ is the product of two factors, one penalizing high control cost and one penalizing high state cost. So  controls with large values or  resulting in high-cost states occur with probability exponentially small. 

Equation \eqref{equ:optimal_dis} provides the optimal control sequence distribution under the current cost.  Suppose that the  control sequences for expert demonstrations $\expertDemo_t$ follow a distribution $E$.
We  define the loss $\obj(\theta; \expertDemo_t, x_t)$ as  the KL-divergence between the two distributions:
\begin{equation}
\obj(\theta; \expertDemo_t, x_t) = \dkl \bigl(p_{\sss E}(V|x_t) \parallel p_{\sss Q}(V|\baseControlInput,\noiseMatrix,x_t;\theta)\bigr)
    \label{kl},
\end{equation}
which \algo seeks to minimize in each iteration. 
While the loss $\obj$ is defined in terms of control sequence distributions, the expert demonstrations $\expertDemo$ provide state information and not control information. However, each control sequence $V$ induces a corresponding state sequence $\traj$ for a given start state $x_0$, and $\traj$ determines the cost of $V$ according to \eqref{equ:cost}. We show in the next subsection that 
 $\partial \obj / \partial \theta$ can be computed efficiently using only state information from $\expertDemo$. This makes \algo  appealing for learning tasks in which control labels are difficult or impossible to acquire. 
 
\subsection{Gradient Optimization}\label{sec:grad}
To simplify  notations, we remove the explicit dependency on  $x_t$ and $\expertDemo_t$ and assume that in iteration $t$ of \algo, all control sequences are applied with $x_t$ as the start state and expert demonstrations come from $\expertDemo_t$. 
We have 
\begin{align}
   {\partial \obj \over \partial \theta}
 & = {\partial \over \partial \theta}\int p_{\sss E}(V)\log\;\frac{p_{\sss E}(V)}{p_{\sss Q}(V|\baseControlInput, \noiseMatrix;\theta)}d V \nonumber \\ 
 &  =  \int p_{\sss E}(V)\Bigl(\frac{1}{\inverseTemp}\frac{\partial }{\partial \theta}S(V;\theta)\Bigr) dV - 
\int p_{\sss Q}(V|\baseControlInput, \noiseMatrix;\theta)\Bigl(\frac{1}{\inverseTemp}\frac{\partial }{\partial \theta}S(V;\theta)\Bigr)dV \
  \label{integral}
\end{align}
The first line  follows directly from the definition, and the derivation for the second line is available in Appendix~\ref{appendix:gradient}.

We  estimate the two integrals in  (\ref{integral}) through sampling. For the first integral, we can use the expert demonstrations as samples. For the second integral, we cannot sample $p_{\sss Q}$ directly, as the optimal control distribution $Q$ is unknown in advance. Instead, we sample from a known Gaussian distribution with density $p(V|U,\noiseMatrix)$ and apply importance sampling so that 
\begin{equation}
    \mathbb{E}_{p_{\sss Q}(V|\baseControlInput, \noiseMatrix;\theta)}[V] = \mathbb{E}_{p(V|U, \noiseMatrix)}[w(V)V].
     \label{equ:importance_sampling_control}
\end{equation}
The importance weight $w(V)$ is given by 
\begin{equation}
    w(V)  
    = \frac{p_{\sss Q}(V|\baseControlInput, \noiseMatrix;\theta)}{p(V|U,\noiseMatrix)}
    =  \frac{p_{\sss Q}(V|\baseControlInput, \noiseMatrix;\theta)}{p_{\sss B}(V|\baseControlInput, \noiseMatrix)} \times \frac{p_{\sss B}(V|\baseControlInput, \noiseMatrix)}{p(V|U, \noiseMatrix)} 
    \label{weight}
\end{equation}
To simplify the first ratio in (\ref{weight}), we  substitute in the expression for $p_{\sss Q}$ from  (\ref{equ:optimal_dis}):
\begin{equation}
    \frac{p_{\sss Q}(V|\baseControlInput, \noiseMatrix;\theta)}{p_{\sss B}(V|\baseControlInput, \noiseMatrix)}= \frac{1}{Z}\exp\Bigl(-\frac{1}{\lambda}(S(V;\theta)\Bigr)
    \label{ratio1}
\end{equation}
 We then simplify the second ratio, as both are Gaussian distributions with the same covariance matrix~$\noiseMatrix$:
\begin{equation}\label{ratio2}
    \frac{p_{\sss B}(V|\baseControlInput, \noiseMatrix)}{p(V|U, \noiseMatrix)}  \propto \exp\bigg(-\sum_{k=0}^{{K-1}}(u_k-u^{\sss B}_k)^{\intercal}\noiseMatrix^{-1}v_k)\bigg),
\end{equation}
where $u_k$ and $v_k$, $k=0,1,\ldots, K-1$ are the mean controls and the sampled controls from $p(V|U, \noiseMatrix)$, respectively. Similarly, $u_k^{\sss B}$, $ k=0,1,\ldots, K-1$ are the mean controls for the base distribution, and  they are all $0$ in our case.  
Substituting (\ref{ratio1}) and (\ref{ratio2}) into \eqref{weight},  we have 
\begin{equation}
    w(V) \propto \exp(-\frac{1}{\lambda}\bigg( S(V;\theta)+\lambda\sum_{k=0}^{K-1}u_k^{\intercal}\noiseMatrix^{-1}v_k\bigg)
\label{eqn:weights}
\end{equation}
For each sampled control sequence $V$, we evaluate the expression in (\ref{eqn:weights}) and normalize over all samples to obtain $w(V)$. 

To summarize, we estimate $\partial \obj / \partial \theta  $ through sampling:
\begin{align}\label{equ:grad}
    \frac{\partial}{\partial \theta}\obj(\theta; \expertDemo_t, x_t)& \approx \frac{1}{N}\sum_{i=1}^{N}\frac{1}{\inverseTemp}\frac{\partial}{\partial \theta}S(V_i, x_t;\theta)-\frac{1}{M}\sum_{j=1}^{M}\frac{1}{\inverseTemp}w(V_j)\frac{\partial }{\partial \theta}S(V_j,x_t;\theta),
\end{align}
where $V_i, i=1, \ldots, N$ are the control sequences for the expert demonstrations in $\expertDemo_t$ and $V_j, j=1,2, \ldots, M$ are the sampled control sequences. 
Equation~\eqref{equ:grad} connects $\partial \obj/\partial \theta$ with $\partial \stateCostFunc / \partial \theta$. The state cost function $\stateCostFunc$ is represented as a shallow neural network, and its derivative can be  obtained easily through back-propagation. To evaluate $\frac{\partial}{\partial \theta}S(V_i, x_t;\theta)$, we do not actually use the expert control sequences, as they are unknown. We use the corresponding state trajectories in $\expertDemo_t$ directly, as the state cost depends only on the visited states. See equation \eqref{equ:statecost}.

Finally, we  approximate the optimal mean control sequence according to   \eqref{equ:importance_sampling_control}:
\begin{equation}
    U = \mathbb{E}_{p_{\sss Q}(V|\baseControlInput, \noiseMatrix,x_t;\theta)}[V] \approx \sum_{j=1}^{M} w(V_j)V_j. 
\end{equation}
The first element in the 
control sequence $U$ is the chosen control for the current time step $t$. We remove the first element from $U$ and append zero at the end. The new control sequence is then used as the mean for the sampling distribution in the next iteration.

\subsection{Analysis}
\label{sec:analysis}
Since \algo performs local optimization sequentially over many steps, one main
concern is error accumulation over time. For example, standard behavior
cloning has one-step error $\epsilon$ and cumulative error $O(T^2 \epsilon)$
over $T$ steps, because of covariate shift~\citep{ross2011reduction}. The
DAgger algorithm reduces the error to $O(T \epsilon)$ by querying the expert
repeatedly during online learning~\citep{ross2011reduction}. We prove a similar
result for \algo, which uses offline expert demonstrations only.  In iteration
$t$ of \algo, let $p_{\sss E}(V_t|x_t)$ be the $K$-step expert demonstration
distribution and $p_{\sss Q}(V_t|\baseControlInput,\noiseMatrix,x_t;\theta)$
be the computed $K$-step optimal control distribution for some fixed cost
parameters $\theta$. \algo minimizes the KL-divergence between these two
distributions in each iteration. Let $\expertStateDistribution(x)$ be the
state marginal distribution of expert demonstrations and
$\RHCStateDistribution(x ; \theta)$ be the state marginal distribution of the
computed RHC policy under $\theta$ over the entire task duration
$T$. Intuitively, we want the control policy under the learned cost to visit
states similar to those of expert demonstrations in distribution. In other words,
$\RHCStateDistribution(x;\theta)$ and $\expertStateDistribution(x)$ are close.
\begin{theorem}\label{th:StateMarginalError}
If $D_{\sss \emph{KL}}\bigl(p_{\sss E}(V_t|x_t) \parallel p_{\sss Q}(V_t|\baseControlInput,\noiseMatrix,x_t;\theta)\bigr) < \epsilon$ for all $t = 0, 1, ..., T-1$, then 
$
    D_{\sss \emph{TV}}(\expertStateDistribution(x) , \RHCStateDistribution(x; \theta)) <  T\sqrt{\epsilon / 2}.
$
\end{theorem}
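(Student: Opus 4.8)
The plan is to treat Theorem~\ref{th:StateMarginalError} as a ``simulation-lemma''-style statement: convert the per-iteration KL bound into a per-step total-variation bound via Pinsker's inequality, push it through the (deterministic) dynamics using the data-processing inequality to compare the one-step state-transition kernels induced by the expert and by the RHC policy, and then propagate the discrepancy across the $T$ steps by induction, arranging the bookkeeping so that the errors accumulate \emph{additively} (as in the DAgger analysis) rather than compounding quadratically (as in behavior cloning). Concretely, write $d_t^{\sss E}$ and $d_t^{\sss Q}$ for the state marginals at time $t$ under the expert and under the RHC policy for the fixed cost $\theta$, with $d_0^{\sss E} = d_0^{\sss Q} = \mu$ since both roll out from the initial distribution, and read $\expertStateDistribution = \frac{1}{T}\sum_{t=0}^{T-1} d_t^{\sss E}$ and $\RHCStateDistribution(\cdot;\theta) = \frac{1}{T}\sum_{t=0}^{T-1} d_t^{\sss Q}$ as the time-averaged state occupancies.

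First, Pinsker's inequality applied to the hypothesis gives $D_{\mathrm{TV}}\!\bigl(p_{\sss E}(V_t|x_t),\, p_{\sss Q}(V_t|\baseControlInput,\noiseMatrix,x_t;\theta)\bigr) \le \sqrt{\epsilon/2}$ for every $t$. Next, the successor state is obtained from the $K$-step control sequence by reading off its first coordinate $v_t$ — which already carries the control noise, since $V_t$ is distributed around a zero-mean base sequence with covariance $\noiseMatrix$ — and applying the deterministic map $v \mapsto f(x_t, v)$; since marginalizing onto a coordinate and pushing forward through a fixed map are both applications of a Markov kernel, the data-processing inequality for total variation yields $D_{\mathrm{TV}}\!\bigl(\mathcal{P}_t^{\sss E}(\cdot|x), \mathcal{P}_t^{\sss Q}(\cdot|x)\bigr) \le \sqrt{\epsilon/2}$ for the induced transition kernels $\mathcal{P}_t^{\sss E}, \mathcal{P}_t^{\sss Q}$. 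I then prove by induction on $t$ that $D_{\mathrm{TV}}(d_t^{\sss E}, d_t^{\sss Q}) \le t\sqrt{\epsilon/2}$: the base case is immediate, and for the inductive step I split $d_{t+1}^{\sss E} - d_{t+1}^{\sss Q}$ into $\int \mathcal{P}_t^{\sss E}(\cdot|x)\,[d_t^{\sss E} - d_t^{\sss Q}](dx)$, whose total variation is at most $D_{\mathrm{TV}}(d_t^{\sss E}, d_t^{\sss Q}) \le t\sqrt{\epsilon/2}$ because applying a Markov kernel cannot increase total variation, plus $\int [\mathcal{P}_t^{\sss E}(\cdot|x) - \mathcal{P}_t^{\sss Q}(\cdot|x)]\,d_t^{\sss Q}(dx)$, whose total variation is at most $\mathbb{E}_{x\sim d_t^{\sss Q}}\!\bigl[D_{\mathrm{TV}}(\mathcal{P}_t^{\sss E}(\cdot|x), \mathcal{P}_t^{\sss Q}(\cdot|x))\bigr] \le \sqrt{\epsilon/2}$ by convexity of the norm; summing the two bounds gives $(t+1)\sqrt{\epsilon/2}$.

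Finally, convexity of total variation gives $D_{\mathrm{TV}}(\expertStateDistribution, \RHCStateDistribution(\cdot;\theta)) \le \frac{1}{T}\sum_{t=0}^{T-1} D_{\mathrm{TV}}(d_t^{\sss E}, d_t^{\sss Q}) \le \frac{1}{T}\sum_{t=0}^{T-1} t\sqrt{\epsilon/2} = \tfrac{T-1}{2}\sqrt{\epsilon/2} < T\sqrt{\epsilon/2}$, which is the claim (in fact with roughly a factor of two to spare). The extension to stochastic dynamics adds only one more pushforward — through the transition kernel $P(\cdot\,|\,x,v)$ — inside the data-processing step, which is again a Markov kernel, leaving the rest unchanged. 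The main obstacle is the bookkeeping in the induction: the kernel-mismatch term must be averaged against the \emph{learner's} marginal $d_t^{\sss Q}$, which is exactly the state distribution along which \algo generates its rollouts and hence where the hypothesized divergence bound is assumed to hold (and if that bound is only assumed in expectation, Jensen's inequality still delivers $\sqrt{\epsilon/2}$); averaging against $d_t^{\sss E}$ instead would not close the recursion without an extra assumption relating the two marginals, and this very choice is what turns the naive quadratic-in-$T$ covariate-shift estimate into the stated linear-in-$T$ bound. A minor definitional point is to take the RHC policy to execute the first-coordinate control of $p_{\sss Q}$ (the idealized limit of the MPPI mean update in \algoref{algo}), which is what lets the control noise $\noiseMatrix$ be absorbed into $V_t$ with no separate treatment.
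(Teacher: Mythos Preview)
Your argument is correct and follows the standard ``simulation-lemma'' template: Pinsker on the per-iteration KL assumption, data-processing to push the TV bound down to the one-step transition kernels, and then a telescoping induction on $D_{\mathrm{TV}}(d_t^{\sss E},d_t^{\sss Q})$ that adds $\sqrt{\epsilon/2}$ per step. Your careful choice to average the kernel-mismatch term against the \emph{learner's} marginal $d_t^{\sss Q}$ is exactly what makes the recursion close, and your reading of the hypothesis (the KL bound holds along the RHC rollout states) matches the paper's own interpretation in its Lemma~\ref{lemma3}.

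The paper takes a rather different route. It first uses an information-loss (data-processing for $f$-divergences) lemma to pass from KL on control sequences to KL on the induced $K$-step state distributions (Lemma~\ref{lemma2}). It then builds a sequence of \emph{hybrid} global occupancy measures $p_{\sss\mathrm{RHC}}^t(x;\theta)$, where the learner runs for the first $t$ steps and then ``switches'' to the expert via an assumed \emph{one-step recoverable} policy; a generalized log-sum inequality bounds the KL between consecutive hybrids by $\frac{K+1}{T}\epsilon$ (Lemma~\ref{lemma3}), after which Pinsker plus the triangle inequality on TV give $(T-1)\sqrt{(K+1)/T}\sqrt{\epsilon/2}\le T\sqrt{\epsilon/2}$. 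Compared to yours, the paper's argument is more elaborate, introduces an additional structural assumption (one-step recoverability) that your proof does not need, and works in KL until the very end rather than moving to TV immediately. Your approach is more elementary and yields a slightly better constant ($\tfrac{T-1}{2}$ in place of $T$); the paper's hybrid construction, on the other hand, stays closer to the operational picture of RHC (replan, execute one step, switch) and makes the role of the horizon $K$ explicit in the intermediate bound.
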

The theorem says that \algo's cumulative error, measured in total variation
distance $D_{\sss \textrm{TV}}$ between $\expertStateDistribution(x)$ and
$\RHCStateDistribution(x;\theta)$, grows linearly with $T$.  The proof
consists of three steps. First, in each iteration $t$, if the KL-divergence
between two control sequence distributions are bounded by $\epsilon$, so is
the KL-divergence between the two corresponding state distributions induced by
control sequences. Next, we show that the KL-divergence between the state
distributions over two successive time steps are bounded by the same
$\epsilon$. Finally, we switch from KL-divergence to total variation distance
and apply the triangle inequality to obtain the final result. Note that our
local optimization's objective is defined in KL divergence, while the final
error bound is in TV distance. We switch the distance measures to get the best
from both.  Minimizing the KL divergence leads to strong local optimization
result, but KL itself is not a proper metric. Therefore, we further bound the
KL divergence by TV distance to obtain a proper metric bound for the final
result. The full proof is given in Appendix~\ref{sec:proof}. Since RHC
performs local optimization in each iteration, we cannot guarantee global
optimality. However, the theorem indicates that unlike standard behavior
cloning, the cumulative error of \algo grows linearly and not quadratically in
$T$. This shows one advantage of IRL over behavior cloning from the
theoretical angle.

Given a control policy $V$ with the resulting state marginal distribution $p_{\sss V}(x)$, we can calculate the  expected total cost of $V$ by integrating the one-step cost over $p_{\sss V}$.  Now suppose that the one-step cost is bounded. Theorem~\ref{th:StateMarginalError} then
implies that the regret in total cost, compared with the expert policy, also grows linearly in $T$.

\subsection{Extension to Stochastic Dynamics}
Suppose that the system dynamics is stochastic:
$x_{t+1} = f(x_t, v_t, \omega_t)$, where $\omega_t$ is a random variable that
models the independent system noise. \algo still applies, with
modifications. We redefine the total cost functions
$\tilde{\controlSeqCostFunc}(V, \stateinit; \theta)$ and
$\tilde{\stateCostFunc}(V, \stateinit;\theta)$ by taking expectation over the
system noise distribution. When calculating the importance weight
$\tilde{w}(V)$ in \eqref{eqn:weights}, we sample over the noise distribution
to estimate the expected total state cost. Finally, we may need more samples
when estimating the gradient in~\eqref{equ:grad}, because of the increased
variance due to the system noise. The full derivation of the extension to
stochastic dynamics  is given in Appendix
\ref{Appendix:extension_stochastic}.  The experiments in
the current work  all have deterministic dynamics.
We leave experiments with the extension  to future work.

\section{Experiments}
\label{sec:expr}

\begin{figure*}
\centering
\hspace*{-15pt}
\small
\begin{tabular}{l@{\hspace{-5pt}}c@{\hspace{10pt}}c@{\hspace{15pt}}c@{\hspace{15pt}}c@{\hspace{15pt}}c@{\hspace{15pt}}c}
& \multicolumn{1}{c}{Pendulum-v0} &
 \multicolumn{1}{c}{LunarLander-v2} &
 \multicolumn{1}{c}{Hopper-v2} &
 \multicolumn{1}{c}{Walker2d-v2} &
 \multicolumn{1}{c}{Ant-v2} &
 \multicolumn{1}{c}{CarRacing-v0}
 \\
    &   
          \includegraphics[width=0.14\columnwidth]{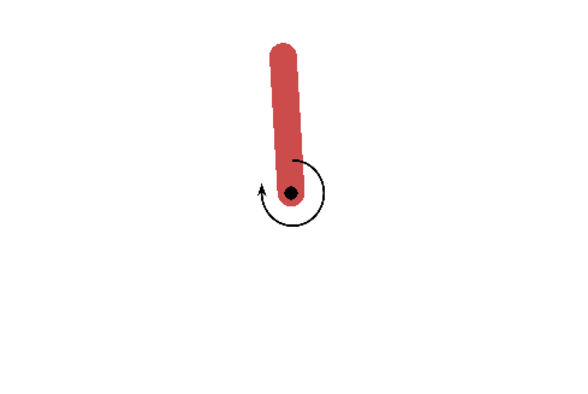}
    & 
          \includegraphics[width=0.14\columnwidth]{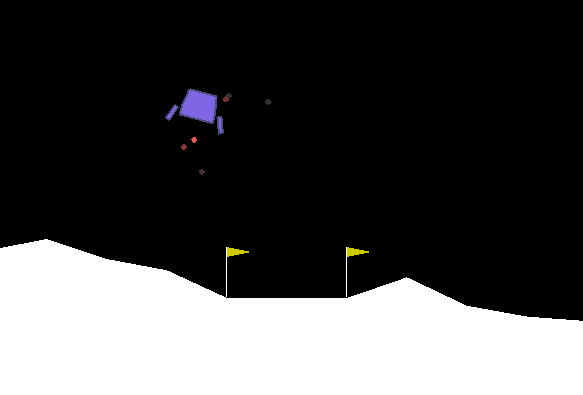}
    & 
          \includegraphics[width=0.14\columnwidth]{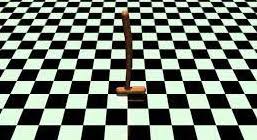}
    & 
          \includegraphics[width=0.14\columnwidth]{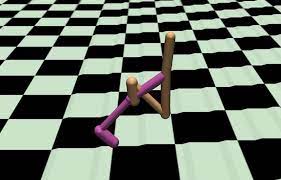}
    & 
          \includegraphics[width=0.14\columnwidth]{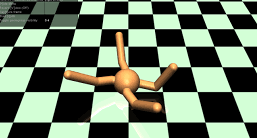}
    & 
          \includegraphics[width=0.14\columnwidth]{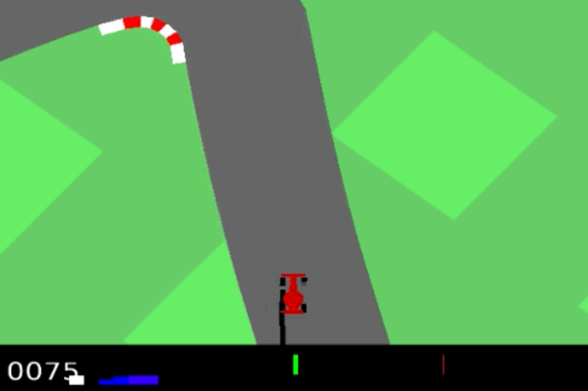}
    \\

$|\mathcal{S}|$: &3-D             &   8-D             &  11-D     &17-D             &    111-D            &     $96\times 96$ (image)        \\
$|\mathcal{A}|$:  & 1-D           &    2-D           &    ~~3-D & ~~6-D   &    ~~~~8-D             &     3-D      
 \\
 ~~$T$:  & 100         &    250         &    1000 & 1000   &   1000             &     1000    
 \\
\end{tabular}
\caption{Benchmark tasks. $|\mathcal{S}|$ and $|\mathcal{A}|$ denote the dimensions of the state space and the action space, respectively. $T$ denotes the task horizon. }\label{tasks}
\vspace*{-6pt}
\end{figure*}

We investigate two main questions. Does \algo scale up to high-dimensional continuous control tasks? Does \algo learn a robust cost function under noise? 

\subsection{Setup}
We compare \algo with two leading IRL algorithms, namely AIRL~\citep{AIRL} and $f$-IRL~\citep{fIRL}, and one imitation learning algorithm,  GAIL~\citep{GAIL}. In particular, $f$-IRL is a recent algorithm that achieves leading performance on high-dimensional control tasks.
We use the implementation of AIRL, GAIL, and $f$-IRL  from the $f$-IRL's official repository along with the reported hyperparameters~\citep{fIRL}, whenever possible. We also perform hyperparameter search on a grid to optimize the performance of every method on every task. 
The specific hyperparameter  settings used are reported in  Appendix \ref{Appendix:task}.  

Our benchmark set consists of six continuous control tasks (\figref{tasks}) from OpenAI Gym~\citep{brockman2016openai}, with increasing sizes of state and action spaces. For the most complex task, CarRacing, the input consists of   $96\times96$ raw images, resulting in an enormous state space that poses great challenges \citep{world}. To our knowledge, \algo is the first few IRL algorithms to attempt such a high-dimensional space. For fair comparison, we customize all tasks to the fixed task horizon settings (\figref{tasks}. See  Appendix~\ref{Appendix:task} for details on task parameter settings.

We use WorldModel~\citep{world} to generate expert demonstration data for CarRacing and use SAC~\citep{SAC} for the other tasks. 
We add Gaussian noise to the input controls and collect  expert demonstrations at different control noise levels. The covariance  of the control noise is \textit{unknown} to all methods, including, in particular, \algo. 

To measure the performance of the learned cost function and policy, we score its induced optimal policy using the ground-truth cost function. 
For ease of comparison with the literature, we use negated cost values, \ie, rewards, in all reported results. Higher values indicate better performance.
Each experiment is repeated 10 times to estimate the performance variance.

\subsection{Scalability}
\begin{wraptable}{r}{0.73\columnwidth}
\fontsize{7}{7}
\selectfont
\vspace*{-12pt}
\caption{Performance comparison of \algo and other methods. The performance is reported  as the ratio of the learned policy's average return and  the expert's average return. 
The absolute average returns and the standard deviations are reported in Appendix~\ref{sec:AddResults}.  Negative ratios are clipped to $0$.
The two numbers under the name of each environment indicate the dimensions of the state space and the action space, respectively.}
\centering
    \begin{tabular}{l@{\hspace*{6pt}}l@{\hspace*{10pt}}ccc}\toprule
        &&\textbf{\makecell{No Noise\\ $\Sigma = 0$}} & \textbf{\makecell{Mild Noise\\$\Sigma = 0.2$}} & \textbf{\makecell{High Noise\\$\Sigma = 0.5$}}\\\midrule
        Pendulum   & Expert & -154.69 $\pm$ 67.61 & -156.50 $\pm$ 70.72 & -168.54 $\pm$ 80.89\\
        ~~3, 1 & \algo & 1.06 & \textbf{1.07} & \textbf{1.08} \\
             & $f$-IRL & \textbf{1.07} & 1.06  & 0.93\\
             & AIRL & 1.05 & 0.94 & 0.91\\
             & GAIL & 0.88 & 0.89 & 0.80\\
        \midrule
       Lunarlander & Expert & 232.00 $\pm$ 86.12 & 222.65 $\pm$ 56.35 & 164.52 $\pm$ 16.79\\ 
         ~~8, 2   & \algo & \textbf{1.05} & \textbf{1.04} & \textbf{1.20}\\
             & $f$-IRL & 0.76 & 0.63 & 0.74\\
             & AIRL & 0.74 & 0.60 & 0.58\\
             & GAIL & 0.72& 0.56 & 0.60 \\
            
        \midrule
        Hopper & Expert & 3222.48 $\pm$ 390.65 & 3159.72 $\pm$ 520.00 & 2887.72 $\pm$ 483.93\\ 
         ~~11, 3    & \algo & 0.95 & \textbf{0.98} & \textbf{0.96}\\
             & $f$-IRL & \textbf{0.96}  & 0.82 & 0.43\\
             & AIRL & 0.01 & 0.01 & 0.01\\
             & GAIL & 0.82 & 0.50 & 0.24\\
             
         \midrule
         Walker2d & Expert & 4999.47 $\pm$ 55.99 & 4500.43 $\pm$ 114.48& 3624.48 $\pm$ 95.05\\
          ~~17, 6      & \algo & \textbf{0.99} & \textbf{0.99} & \textbf{0.95}\\
             & $f$-IRL & \textbf{0.99} & 0.82 & 0.78\\
             & AIRL & 0.00 & 0.00 & 0.00\\
             & GAIL & 0.50 & 0.64 & 0.51\\
        \midrule
        Ant  & Expert & 5759.22 $\pm$ 173.57 &  3257.37 $\pm$ 501.95 & 252.62 $\pm$ 91.44\\ 
          ~~111, 8      & \algo & 0.86 & \textbf{0.93} & \textbf{0.91}\\
             & $f$-IRL & \textbf{0.87} & 0.80 & 0.78\\
             & AIRL & 0.17 & 0.33 & 0.00\\
             & GAIL & 0.48 & 0.40 & 0.00\\
            
        \midrule
        CarRacing  & Expert & 903.25 $\pm$ 0.23 & 702.01 $\pm$ 0.3 & 281.12 $\pm$ 0.34\\
         ~~$96\times 96$, 3       & \algo & \textbf{0.40} & \textbf{0.29}  & \textbf{0.19}\\
             & $f$-IRL & 0.09 & 0.03 & 0.00\\
             & AIRL & 0.00 & 0.00 & 0.00\\
             & GAIL & 0.00 & 0.01 & 0.00\\
        \bottomrule
    \end{tabular}
\vspace{-0.5cm}
\label{tab:scalablity}
\end{wraptable}
We compare \algo with other methods, first in noise-free environments (\figref{fig:efficiency}) and then with increasing noise levels (\tabref{tab:scalablity}).

\begin{figure}
\centering
\captionsetup[subfigure]{font=scriptsize,labelfont=scriptsize,position=bottom,labelformat=empty}
\subfloat[Drive in the center of the lane. ]{\includegraphics[width=0.23\columnwidth]{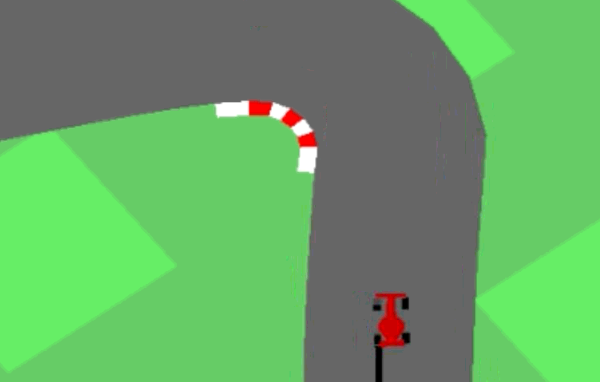}}
\hspace{0.02\columnwidth}
\subfloat[Adjust the direction at a sharp turn.]{\includegraphics[width=0.23\columnwidth]{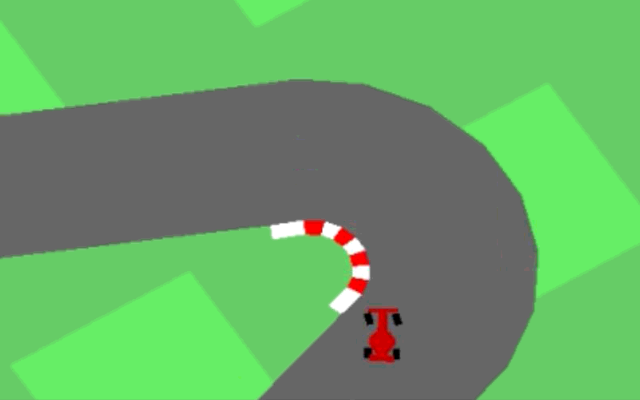}}
\hspace{0.02\columnwidth}
\subfloat[Choose the shortest path to make the turn.]{\includegraphics[width=0.23\columnwidth]{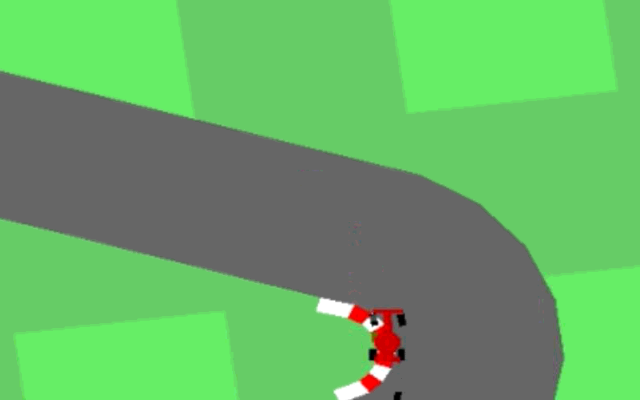}}
\hspace{0.02\columnwidth}
\subfloat[Align to the lane center after the turn.]{\includegraphics[width=0.23\columnwidth]{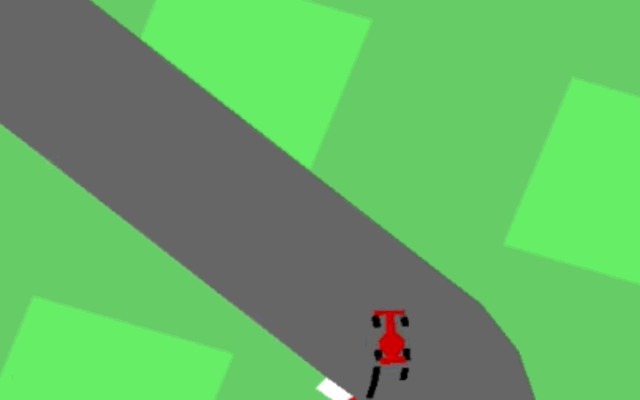}}
\caption{Driving behaviors learned by \algo for CarRacing-v0, with pixel-level raw image input.}
\label{fig:carracing}
\vspace{-10pt}
\end{figure}

\figref{fig:efficiency} shows the learning curve of each methods in noise-free environments. Overall, \algo converges faster and achieves higher return, especially for tasks with  higher state space dimensions. This improved performance suggests that the benefit of local optimization adopted by \algo outweighs its potential limitations. 

\tabref{tab:scalablity} shows the final performance of all methods at various noise levels. \algo clearly outperforms AIRL and GAIL in all experiments. So we focus our discussion on comparison with $f$-IRL. In noise-free environments, \algo and $f$-IRL perform comparably on most tasks. On CarRacing, the most challenging task, \algo performs much better. 
\algo manages to learn the critical driving behaviors  illustrated in \figref{fig:carracing}, despite the high-dimensional image input. However, \algo does not manage to learn to drive fast enough. That is the main reason why it under-performs the expert. 
In comparison, $f$-IRL only learns to follow a straight lane after  a large number of environment steps, and still fails to make a sharp turn after $3.0\times 10^7$ environment steps. In the  noisy environments, the advantage of \algo over $f$-IRL is more pronounced even on some of the low-dimensional tasks, because \algo accounts for the control noise explicitly in the cost function.

\begin{figure*}
\includegraphics[width=1\columnwidth]{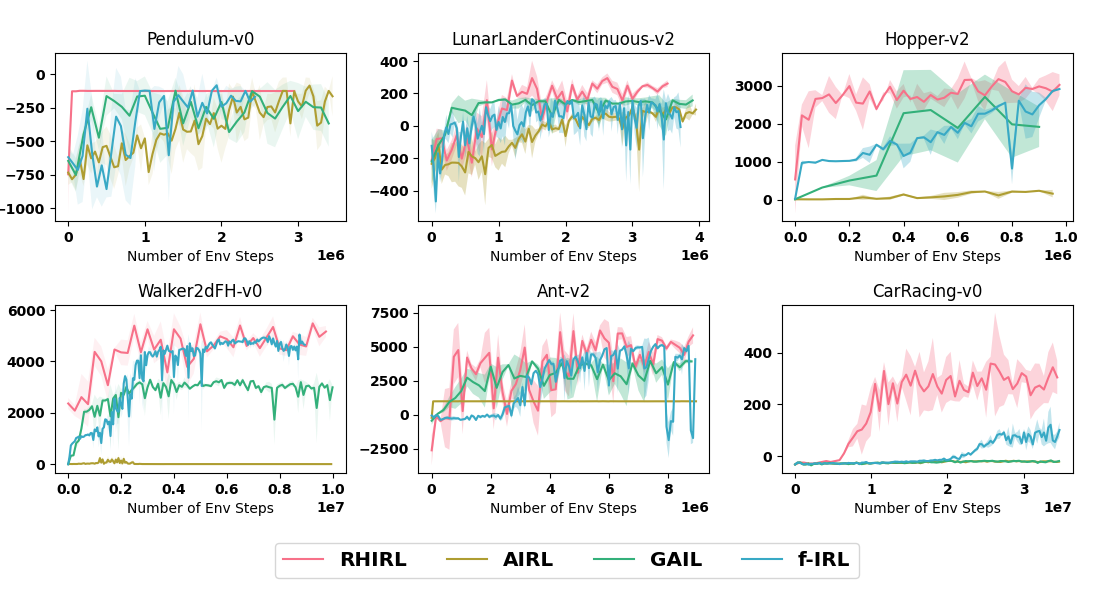}
\caption{Learning curves for \algo and other methods. 
}\label{fig:efficiency}
\vspace{-12px}
\end{figure*}

\subsection{Robustness}\label{sec:robustness}
\begin{wraptable}{r}{0.6\columnwidth}
\vspace{-12pt}
\fontsize{7.5}{7.5}
\selectfont
\caption{Robustness of \algo and other methods under noise. 
The performance is measured as the ratio between the average return of an re-optimized policy in a noisy environment and the expert's average return in the corresponding noise-free environment. The absolute average returns and the standard deviations are reported in Appendix~\ref{sec:AddResults}. The negative ratios are clipped to $0$.}
\centering
\begin{tabular}{lc c c c}\toprule
 & \multirow{2}{*}{}{} & \multicolumn{3}{c}{\textbf{\makecell{~~~~~~~~~~Noise Level $\Sigma$  }}} \\
 & &\textrm{0.0}& \textrm{0.2}   & \textrm{0.5}    \\
                          \midrule
Pendulum & Expert  &-154.69 $\pm$ 67.61 & -- & -- \\
   ~~3, 1         & \algo & 1.06  & \textbf{1.07} & \textbf{1.06} \\ 
            & $f$-IRL & \textbf{1.08} & 0.90 & 0.85  \\
            & AIRL & 1.05 & 0.79 & 0.67 \\
            & GAIL  & 0.88 & 0.71 & 0.62\\
                          \midrule
LunarLander & Expert & 232.00 $\pm$ 86.12 & -- & --\\ 
   ~~8, 2         & \algo & \textbf{1.05} & \textbf{0.89}  & \textbf{0.76}\\
            & $f$-IRL & 0.76 & 0.53  & 0.44 \\
            & AIRL  & 0.74 & 0.14 & 0.10  \\
            & GAIL  & 0.72  & 0.44 & 0.34\\
                          \midrule
Hopper & Expert & 3222.48 $\pm$ 390.65 & -- & -- \\ 
  ~~11, 3          & \algo & 0.95 & \textbf{0.80}  & \textbf{0.67}\\
            & $f$-IRL  &\textbf{0.96}&  0.65  & 0.62\\
            & AIRL  &0.01&  0.01  & 0.00\\
            & GAIL &0.82 & 0.07 & 0.06\\
                          \midrule
Walker & Expert  & 4999.47 $\pm$ 55.99 & -- & -- \\
  ~~17, 6          & \algo &\textbf{0.99}& \textbf{0.80}  & \textbf{0.69}\\
            & $f$-IRL  &\textbf{0.99}& 0.60 & 0.22\\
            & AIRL  &0.00& 0.28 & 0.36\\
            & GAIL &0.50 &  0.02 & 0.02\\
                          \midrule
Ant & Expert  & 5759.22 $\pm$ 173.57 & -- & -- \\ 
  ~~111, 8          & \algo &\textbf{0.86} & \textbf{0.55} & \textbf{0.15} \\
            & $f$-IRL &0.87 & 0.35  & 0.08\\
            & AIRL & 0.17 & 0.15 & 0.00 \\
            & GAIL  & 0.48 & 0.00 & 0.00\\
             \midrule
CarRacing  & Expert & 903.25 $\pm$ 0.23 & -- & -- \\
  ~~$96\times 96$, 3         & \algo &\textbf{0.40}& \textbf{0.29} & \textbf{0.12}\\
            & $f$-IRL  &0.09& 0.02 & 0.00\\
            & AIRL  &0.00& -- & -- \\
            & GAIL &0.00  & -- & --\\
\bottomrule
\end{tabular}
\vspace{-0.8cm}
\label{tab:noise}
\end{wraptable}

Next we evaluate \algo and other methods for robustness under noise. 
A robust cost function encodes the expert's true intent. It is  expected to yield consistent performance over different noise levels, regardless of noise in expert demonstrations.

For each task, a cost function is learned in a noise-free environment and is then used to re-optimize a policy in the corresponding noisy environments.
Specifically for GAIL, since it learns a policy and does not recover the associated cost function, we directly apply the learned  policy in  noisy environments. 

\tabref{tab:noise} shows that noise causes performance degradation in all methods. However, \algo is  more robust in comparison. For simple  tasks,  Pendulum and Lunarlander, \algo and $f$-IRL perform consistently well across different noise levels, while GAIL and AIRL fail to maintain their good performance, when the noise level increases. 
For the more challenging tasks, Hopper and Walker,  \algo's performance degrades mildly, and $f$-IRL suffers more significant performance degradation. 
It is worth noting that the expert demonstrations used to train the transferred cost function are from the perfect system. Therefore, some expert actions and states may no longer be optimal or feasible in a highly noisy environment. Moreover, the cost function trained in the perfect system cannot reason about the long-term consequences of an action in a high noise environment. Therefore, it is challenging for the learned cost function to be robust to a highly noisy environment, as capturing the true intention of the expert is difficult. 

\subsection{Effect of Receding Horizon $K$}
\label{sec:ablate_hopper}
\begin{wrapfigure}{r}{0.4\columnwidth}
\vspace{-25pt}
\centering
\captionsetup[subfigure]{font=scriptsize,labelfont=scriptsize,labelformat=empty}
\includegraphics[width=0.4\columnwidth]{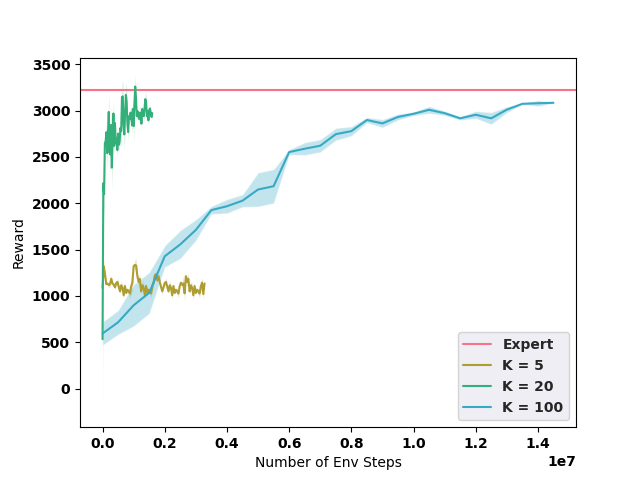}
\caption{The effect of receding horizon $ K$ on the performance of \algo on the Hopper-v2 task.}\label{fig:ablation_hopper}
\end{wrapfigure}
\algo uses the receding horizon $K$ to trade off optimality and efficiency. We hope to ablate the effect of K on Hopper-v2 to show how different $K$s affect the final performance and sample complexity. The task horizon for Hopper-v2 is 1000 steps, \ie $T= 1000$. We run \algo with the receding horizon $K \in \{5, 20, 100\}$. The results are illustrated in \figref{fig:ablation_hopper}. When $K$ is small, \algo improves its performance quickly but converges to the suboptimal solution. For $K = 5$, \algo's performance shoots up after the first few iterations to $1000$, then it quickly converges to a final score of $1100$. When $K$ increases, though the performance improves slightly slower than  $K=5$, it can continue to learn and reach a score of $3071.68$. At $K = 20$, it takes fewer than $1e6$ env steps to stabilize to a score greater than $3000$. However, when $K$ is too large, the learning becomes much slower. When $K= 100$, it takes more than $1e7$ env steps to stabilize to a score larger than $3000$, which is $10$ times more than when $K=20$. On the other hand, $K=100$ can achieve a final score of $3083$, which is slightly more than that of $K=20$. This ablation study shows that our receding horizon $K$ can tradeoff optimality and efficiency: using a smaller $K$ allows us to learn faster at the expense of a sub-optimal solution, while using a large $K$ may make the learning inefficient. Seeking a suitable $K$ can balance the requirement for optimality and efficiency.

\section{Conclusion}\label{sec:conc}
\algo is a scalable and robust IRL algorithm  for high-dimensional, noisy, continuous systems. Our experiments show that \algo outperforms several leading IRL algorithms  on multiple benchmark tasks, especially when expert demonstrations are noisy.
 
\algo's choice of local rather than global optimization is an important issue that deserves  further investigation. 
Overall, we view this as an interesting trade-off between scalability and optimality.  While this trade-off  is well known in reinforcement learning, optimal control, and  general optimization problems, it is mostly unexplored in IRL. Further, local optimization may tie the learned cost with the optimizer. It would be interesting to examine whether the learned cost transfers easily to other domains with different optimizers. 
We are keen to investigate these important issues and their implications to IRL as our next step. 

\paragraph{Acknowledgments}
This research is supported in part by the Agency of Science, Technology and
Research, Singapore, through the National Robotics Program (Grant No. 192 25
00054), the National Research Foundation (NRF), Singapore and DSO National
Laboratories under the AI Singapore Program (AISG Award No:
AISG2-RP-2020-016), and the National University of Singapore (AcRF grant
A-0008071-00-00).  Any opinions, findings and conclusions or recommendations
expressed in this material are those of the authors and do not reflect the
views of NRF Singapore.

\bibliographystyle{abbrv}
\bibliography{example_paper}


\newpage
\appendix
\setcounter{page}{1}

\section{Gradient Derivation} \label{appendix:gradient}
This section derives the gradient ${\partial \obj \over \partial \theta}$ in equation (\ref{equ:grad}) in full details. Recall $\obj$ is defined as:
\begin{equation*}
    \obj = \int p_{\sss E}(V)\log\;\frac{p_{\sss E}(V)}{p_{\sss Q}(V|\baseControlInput, \noiseMatrix;\theta)}d V
\end{equation*}
Firstly, we substitute $\obj$ in ${\partial \obj \over \partial \theta}$ and rewrite $\log\;\frac{p_{\sss E}(V)}{p_{\sss Q}(V|\baseControlInput, \noiseMatrix;\theta)}$ as the difference between $\log\;p_{\sss E}(V)$ and $ p_{\sss Q}(V|\baseControlInput, \noiseMatrix;\theta)$.
\begin{align}\label{equ:grad1}
    \frac{\partial \obj}{\partial \theta}
    = &  \frac{\partial }{\partial \theta} \int p_{\sss E}(V)\log\;\frac{p_{\sss E}(V)}{p_{\sss Q}(V|\baseControlInput, \noiseMatrix;\theta)}dV \nonumber\\
    = &  \int p_{\sss E}(V) \frac{\partial }{\partial \theta}\log\; p_{\sss E}(V) dV -  \int p_{\sss E}(V) \frac{\partial }{\partial \theta} \log\;p_{\sss Q}(V|\baseControlInput, \noiseMatrix;\theta) dV \nonumber\\
    = &   -  \int p_{\sss E}(V) \frac{\partial }{\partial \theta} \log\; p_{\sss Q}(V|\baseControlInput, \noiseMatrix;\theta)dV 
\end{align}

Since $p_{\sss E}(V)$ is independent of $\theta$, the first derivative in the second line of equation (\ref{equ:grad1}) evaluates to 0. Next, we substitute $p_{\sss Q}(V|\baseControlInput, \noiseMatrix;\theta)$ using the optimal control sequence distribution expression in equation (\ref{equ:optimal_dis}):
\begin{align}\label{equ:grad2}
   \frac{\partial \obj}{\partial \theta}
     = & -  \int p_{\sss E}(V)\frac{\partial}{\partial \theta} \log\;  \frac{1}{\partition}p_{\sss B}(V|\baseControlInput, \noiseMatrix)\exp(-\frac{1}{\inverseTemp} S(V;\theta)) dV \nonumber\\
    = & -\int p_{\sss E}(V)\frac{\partial}{\partial \theta}\log p_{\sss B}(V|\baseControlInput, \noiseMatrix)\;dV -\int p_{\sss E}(V)\frac{\partial}{\partial \theta}\log \exp(-\frac{1}{\inverseTemp} S(V;\theta))dV \nonumber \\ &+ \int p_{\sss E}(V)\frac{\partial }{\partial \theta}\log \partition dV  \nonumber\\
     = &  -\int p_{\sss E}(V)\frac{\partial }{\partial \theta} (-\frac{1}{\inverseTemp} S(V;\theta))dV + \int p_{\sss E}(V) \frac{\partial}{\partial \theta}\log \partition dV  
\end{align}
Since $p_{\sss B}(V|\baseControlInput, \noiseMatrix)$ is independent of $\theta$, the first derivative in the second line of equation (\ref{equ:grad1}) evaluates to 0. We are left with only two integrals in equation (\ref{equ:grad2}). 

Next, we factorize out $\frac{\partial}{\partial \theta} \log \partition$ from the integral since the partition function $\partition$ is constant to all $V$: 
\begin{align} \label{equ:grad3}
\frac{\partial \obj}{\partial \theta}
    = &  -\int p_{\sss E}(V)(-\frac{1}{\inverseTemp}\frac{\partial }{\partial \theta}S(V;\theta))dV +  (\frac{\partial}{\partial \theta}\log \partition)\int p_{\sss E}(V)dV  \nonumber\\
    = &  -\int p_{\sss E}(V)(-\frac{1}{\inverseTemp}\frac{\partial}{\partial \theta}S(V;\theta))dV +  \frac{1}{\partition}\frac{\partial \partition }{\partial \theta}
\end{align}
The second line in equation (\ref{equ:grad3}) follows as $\int p_{\sss E}(V)dV = 1$.

Next, we substitute $\partition = \int p_{\sss B}(V|\baseControlInput, \noiseMatrix)\exp(-\frac{1}{\inverseTemp} S(V;\theta)) dV$ in $\frac{\partial \partition  }{\partial \theta}$ and simplify it:
\begin{align}\label{equ:grad4}
     \frac{\partial \obj}{\partial \theta} = &  -\int p_{\sss E}(V)(-\frac{1}{\inverseTemp}\frac{\partial}{\partial \theta}S(V;\theta))dV +  \frac{1}{\partition}\frac{\partial}{\partial \theta} \int p_{\sss B}(V|\baseControlInput, \noiseMatrix)\exp(-\frac{1}{\inverseTemp} S(V;\theta)) dV  \nonumber\\
    = &  -\int p_{\sss E}(V)(-\frac{1}{\inverseTemp}\frac{\partial }{\partial \theta}S(V;\theta))dV +  \frac{1}{\partition}\int p_{\sss B}(V|\baseControlInput, \noiseMatrix)\frac{\partial}{\partial \theta}\exp(-\frac{1}{\inverseTemp} S(V;\theta)) dV  \nonumber\\
    = &  -\int p_{\sss E}(V)(-\frac{1}{\inverseTemp}\frac{\partial}{\partial \theta}S(V;\theta))dV\nonumber \\ 
    & + \frac{1}{\partition}\int p_{\sss B}(V|\baseControlInput, \noiseMatrix)\exp(-\frac{1}{\inverseTemp} S(V;\theta))\frac{\partial}{\partial \theta}(-\frac{1}{\inverseTemp} S(V;\theta))dV \nonumber \\
     = &  -\int p_{\sss E}(V)(-\frac{1}{\inverseTemp}\frac{\partial}{\partial \theta}S(V;\theta))dV\nonumber \\
     & + \int \frac{p_{\sss B}(V|\baseControlInput, \noiseMatrix)\exp(-\frac{1}{\inverseTemp} S(V;\theta))}{\partition}\frac{\partial}{\partial \theta}(-\frac{1}{\inverseTemp} S(V;\theta)) dV  
\end{align}

We rewrite $\frac{1}{\partition}p_{\sss B}(V|\baseControlInput, \noiseMatrix)\exp(-\frac{1}{\inverseTemp} S(V;\theta))$ in the last line of equation (\ref{equ:grad4}) as $p_{\sss Q}(V|\baseControlInput, \noiseMatrix;\theta)$ using equation (\ref{equ:optimal_dis}) and finally we have: 
\begin{align}
    \frac{\partial \obj}{\partial \theta}
    = & \int p_{\sss E}(V)(\frac{1}{\inverseTemp}\frac{\partial}{\partial \theta}S(V;\theta)) dV - \int p_{\sss Q}(V|\baseControlInput, \noiseMatrix;\theta)(\frac{1}{\inverseTemp}\frac{\partial}{\partial \theta}S(V;\theta))dV
\end{align}

\section{Theorem~\ref{th:StateMarginalError} and Proof}
\label{sec:proof}
\label{appendix1}
We use $T$ to denote the task horizon, and $K$ to denote the receding time horizon for local optimization. 
Let us simplify notations in the optimal control sequence distribution $p_{\sss Q}(V|\baseControlInput,\noiseMatrix,x_t;\theta)$ and remove the explicit dependency on  $\baseControlInput$ and $\noiseMatrix$. We assume that all control sequences are applied with $\baseControlInput$ as the base distribution and $\noiseMatrix$ as the covariate matrix. We assume the expert's underlying cost function is parameterized by $\thetaE$, so we have $p_{\sss E}(.) = p(.; \thetaE)$.

\subsection{Sketch}
We present a theoretical analysis on the convergence of \algo. Our main theorem \ref{th:StateMarginalError} states that given that the Kullback–Leibler (KL) divergence over local control sequence distribution for each time step $t=0, 1, ... T-1$ is bounded by $\epsilon$, though we do not query expert during the learning, using the cached expert demonstrations alone allows us to bound the error over global state marginal distribution \textit{linear} in the task horizon $T$ under total variance measure. 

First, we show that if the KL-divergence over the local control sequence distribution is bounded by $\epsilon$, so is the KL-divergence over the resulting state distribution.  At each time step $t$, the optimal control sequences distribution $p_{\sss Q}(V_t|x_t;\theta_t)$ at the initial state $x_t$ contains the full information to generate the corresponding state trajectories $p(\tau_t|x_t;\theta)$, and consequently the state distributions $p_t(x|x_t;\theta)$ (by neglecting the temporal information). Upon applying the information loss (lemma \ref{lemma1}), we prove in lemma \ref{lemma2} that, given initial state $x_t$, if the KL-divergence over $V_t$ is bound by $\epsilon$, so is the corresponding state distribution $p_t(x)$, \ie $ D_{\sss \textrm{KL}}(p_t(x| x_t; \thetaE)\parallel p_t(x|x_t;\theta)) \leq D_{\sss \textrm{KL}}(p(V_{t}|x_t; \thetaE)\parallel p_{\sss Q}(V_{t}|x_t;\theta)) \leq \epsilon$.

Next, we show the KL-divergence over global state marginal distribution between two consecutive time steps is also bounded by $\epsilon$. For each optimal control sequence we compute, we only execute the first control and use the rest to warm start the re-planning for the next time step. Therefore, for each time step, we only change a small region of the global state distribution, \ie reachable space of the current time step. We use $\RHCStateDistribution^t(x; \theta)$ to denote the global state marginal distribution by recursively applying RHC from time step $i = 0, ..., t$ under $\theta$ and switching to $\thetaE$ thereafter until $T-1$. Using generalized log sum inequality, we prove in lemma \ref{lemma3} that if the KL-divergence over $V_t$ is bounded by $\epsilon$ for all $t=0,...,T-1$, the KL-divergence over the global state marginal distribution between each of the two consecutive time steps is bounded by $\epsilon$, \ie $D_{\sss \textrm{KL}}(\RHCStateDistribution^t(x;\theta)\parallel \RHCStateDistribution^{t+1}(x;\theta)) \leq \frac{K+1}{T}\epsilon$.

Finally, we use Pinsker's inequality to upper bound the total variation (TV) distance by KL-divergence over state marginal distribution. Then we use the triangle inequality to show that the TV distance between expert and the actual visited state distribution over the task horizon $T$ using \algo is bounded by an error linearly in $T$, \ie  $D_{\sss \textrm{TV}}(p_{\sss E}(x)\parallel \RHCStateDistribution(x;\theta)) <  T\sqrt{\epsilon/2} $.

\subsection{Proofs}

First, we use \textbf{Lemma \ref{lemma1}, \ref{lemma2}} to prove that the KL-divergence over control sequence space upper bounds the KL-divergence over the resulting state distribution. We define the control sequence starts at task time step $t$ as $V_{t} = \{v_t, v_{t+1}, ..., v_{t+K-1}\}$. Moreover, its corresponding trajectory segment  $\tau_t = \{x_{t}, x_{t+1}, ..., x_{t+K}\}$ is computed uniquely from $V_t$ and initial state $x_t$ by iteratively applying the dynamic model $x_{t+1} = f(x_t, v_t)$. We use $p(x_t)$ to denote the state density at a single time step $t$. Assume $V_t$ is optimized based on the cost parameterization $\theta$, then the corresponding state distribution is defined as the summation of all state density over horizon $K$, \ie $p_t(x; \theta)=\frac{1}{K+1}\sum_{i=t}^{t+K} p(x_i;\theta)$. 

\begin{lemma}[Information loss\citep{f_D}]
Let $a$ and $b$ be two random variables and $f(.)$ be a convex
function. Let $P(a, b)$ be a joint probability distribution. The marginal distributions are $P(a) =
\sum_b P(a, b)$ and $P(b) = \sum_a P(a, b)$. Assume that $a$ can explain away $b$. This is expressed as follows – given any two probability distribution $P(.)$, $Q(.)$, assume the following equality holds for all $a$, $b$: 
\begin{equation}
    P(b|a) = Q(b|a)
\end{equation}
Under these conditions, the following inequality holds:
\begin{equation}
    \sum_a Q(a)f(\frac{P(a)}{Q(a)}) >  \sum_b Q(b)f(\frac{P(b)}{Q(b)}) 
\end{equation}
\label{lemma1}
\end{lemma}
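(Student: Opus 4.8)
The plan is to recognize this statement as the data-processing (monotonicity) inequality for $f$-divergences and to prove it by a single application of Jensen's inequality. The structural fact I would establish first is that the likelihood ratio on the coarse variable $b$ is a convex combination of the likelihood ratios on the fine variable $a$, with mixing weights given by the $Q$-posterior $Q(a \mid b)$; once this is in hand, convexity of $f$ finishes the argument immediately.

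First I would expand the marginals using the ``explain away'' hypothesis. Writing the joints in conditional form, $P(a,b) = P(b\mid a)\,P(a)$ and $Q(a,b) = Q(b\mid a)\,Q(a)$, and invoking the assumption $P(b \mid a) = Q(b \mid a)$, I obtain
\[
P(b) = \sum_a P(b \mid a)\,P(a) = \sum_a Q(b \mid a)\,P(a).
\]
Dividing by $Q(b) = \sum_a Q(b\mid a)\,Q(a)$ and substituting Bayes' rule $Q(a \mid b) = Q(b\mid a)\,Q(a)/Q(b)$ yields the key identity
\[
\frac{P(b)}{Q(b)} = \sum_a Q(a \mid b)\,\frac{P(a)}{Q(a)},
\]
which exhibits $P(b)/Q(b)$ as an average of the ratios $P(a)/Q(a)$ against the probability weights $Q(a\mid b)$, these weights summing to one over $a$.

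Next I would apply Jensen's inequality to the convex function $f$ at each fixed $b$, giving $f\!\left(P(b)/Q(b)\right) \le \sum_a Q(a \mid b)\, f\!\left(P(a)/Q(a)\right)$. Multiplying through by $Q(b)$, summing over $b$, and swapping the order of summation, the inner factor collapses because $\sum_b Q(b)\,Q(a\mid b) = \sum_b Q(a,b) = Q(a)$. This produces
\[
\sum_b Q(b)\, f\!\left(\frac{P(b)}{Q(b)}\right) \le \sum_a Q(a)\, f\!\left(\frac{P(a)}{Q(a)}\right),
\]
which is exactly the claimed inequality.

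The proof has no serious obstacle; the only points requiring care are bookkeeping rather than depth. I would handle degenerate terms where $Q(a)=0$ or $Q(b)=0$ by the standard $f$-divergence conventions (restricting the sums to the support of $Q$ and using the limiting convention $0\cdot f(0/0)=0$), and I would state the conclusion as $\ge$ rather than the strict $>$ appearing in the statement, since equality occurs precisely when $f$ is affine on the relevant range or when coarsening to $b$ loses no information about the ratio. I would then remark that the downstream use of the lemma in Lemma~\ref{lemma2} only needs the weak inequality, so the non-strictness is harmless.
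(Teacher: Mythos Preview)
Your proof is correct and is the standard Jensen-inequality argument for the data-processing inequality for $f$-divergences. Note, however, that the paper does not actually prove Lemma~\ref{lemma1}: it is quoted as a known result from the cited reference and used as a black box in the proof of Lemma~\ref{lemma2}. So there is no ``paper's own proof'' to compare against here; your argument simply supplies one.

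Your remark about strictness is also well taken: as stated with a strict ``$>$'' the lemma is false in general (take $P=Q$, or $f$ affine, or $b$ a bijective function of $a$), and only the weak inequality ``$\ge$'' follows from Jensen. The paper's downstream uses in Lemma~\ref{lemma2} and Lemma~\ref{lemma3} only require the weak form, so this is a cosmetic issue in the statement rather than a gap in your argument.
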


\begin{lemma} \label{lemma2}
    Given the initial state $x_t$ and the two control sequence distributions $p(V_{t}|x_t; \thetaE)$ and $p_{\sss Q}(V_{t}|x_t;\theta)$, the KL-divergence between the resulting state distribution is upper bounded by KL-divergence between the control sequence distribution.
\begin{equation}
         D_{\sss \textrm{KL}}(p_t(x|x_t; \thetaE)\parallel p_t(x|x_t;\theta)) \leq D_{\sss \textrm{KL}}(p(V_{t}|x_t; \thetaE)\parallel p_{\sss Q}(V_{t}|x_t;\theta))
    \end{equation}
\end{lemma}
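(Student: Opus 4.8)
The plan is to prove Lemma~\ref{lemma2} by the data-processing (information-loss) principle: the map taking a control sequence $V_t$ to a sampled state is realized by a fixed Markov kernel that carries no dependence on the cost parameters, so any $f$-divergence can only shrink under it. Instantiating Lemma~\ref{lemma1} with the convex function $f(u) = u\log u$ converts this into the desired KL statement.

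First I would make the intermediate trajectory distribution explicit. Fix the start state $x_t$. Under the deterministic dynamics \eqref{trans_func}, the trajectory segment $\tau_t = (x_t, x_{t+1}, \ldots, x_{t+K})$ is a deterministic function $\Phi_{x_t}(V_t)$ of the control sequence, obtained by iterating $x_{i+1} = f(x_i, v_i)$. Hence the conditional law $p(\tau_t \mid V_t, x_t)$ is a point mass at $\Phi_{x_t}(V_t)$ and is \emph{identical} whether the controls are drawn under $\thetaE$ or under $\theta$ — it involves no cost at all. Applying Lemma~\ref{lemma1} with $a = V_t$, $b = \tau_t$, with $P$ the expert law and $Q$ the $\theta$-induced optimal law, and with $f(u) = u\log u$, gives
\begin{equation*}
\dkl\bigl(p(V_t\mid x_t;\thetaE) \,\|\, p_{\sss Q}(V_t\mid x_t;\theta)\bigr) \;\ge\; \dkl\bigl(p(\tau_t\mid x_t;\thetaE) \,\|\, p(\tau_t\mid x_t;\theta)\bigr).
\end{equation*}

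Next I would handle the ``neglect temporal information'' step. The state distribution $p_t(x;\theta) = \tfrac{1}{K+1}\sum_{i=t}^{t+K} p(x_i;\theta)$ is exactly the law of the state obtained by drawing $\tau_t$, then drawing an index $I$ uniformly over $\{t, \ldots, t+K\}$ independently of $\tau_t$ and of $\theta$, and outputting $x_I$. This is again a $\theta$-free Markov kernel $p(x \mid \tau_t)$, so a second application of Lemma~\ref{lemma1} (now with $a = \tau_t$, $b = x$) yields $\dkl\bigl(p_t(x\mid x_t;\thetaE) \,\|\, p_t(x\mid x_t;\theta)\bigr) \le \dkl\bigl(p(\tau_t\mid x_t;\thetaE) \,\|\, p(\tau_t\mid x_t;\theta)\bigr)$. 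Chaining the two inequalities proves the lemma; equivalently, one may collapse both stages into the single kernel $x = \Phi_{x_t}(V_t)_I$ and invoke Lemma~\ref{lemma1} just once.

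The two kernel identifications are routine; the only part needing care is verifying the hypothesis of Lemma~\ref{lemma1}, namely $P(b\mid a) = Q(b\mid a)$. I expect the main obstacle to be arguing cleanly that \emph{all} randomness linking $V_t$ to the reported state is parameter-free — in particular that the uniform index $I$ is independent of $V_t$ and of $\theta$ — and to apply Lemma~\ref{lemma1}, which is stated with sums, to the continuous state and control spaces at hand. (For the stochastic-dynamics extension the same argument goes through provided the system-noise channel $p(\tau_t \mid V_t, x_t)$ is $\theta$-independent, which holds because $\omega_t$ is exogenous.)
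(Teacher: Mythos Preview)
Your proposal is correct and follows essentially the same two-step route as the paper: first contract the KL from control sequences $V_t$ to trajectories $\tau_t$ via the parameter-free dynamics kernel, then from $\tau_t$ to the averaged state $x$ via the parameter-free ``forget temporal index'' kernel, each time invoking Lemma~\ref{lemma1}. Your write-up is in fact more explicit than the paper's---you spell out $f(u)=u\log u$, the uniform-index construction of $p_t(x)$, and the discrete-vs-continuous caveat---but the underlying argument is identical.
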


\begin{proof}
 Firstly, we prove that the KL-divergence over state trajectory distribution $p(\tau_t)$ is upper bounded by the KL-divergence between $p(V_t)$.
Given dynamical model $f$, the control sequence $V_t$ and the initial state $x_t$ contains all information to generate the corresponding $\tau_t$. Therefore, for any joint distribution $P(\tau_t, V_t|x_t)$ and $Q(\tau_{t}, V_{t}|x_t)$, the following is true
\begin{equation*}
    P(\tau_{t}| V_{t}, x_t) = Q(\tau_{t}| V_{t}, x_t)
\end{equation*}
Upon applying the information loss Lemma \ref{lemma1}, we have the inequality:
\begin{equation} \label{equ:traj_bound}
     D_{\sss \textrm{KL}}(p(\tau_{t}|x_t; \thetaE)\parallel p(\tau_{t}|x_t;\theta))
     \leq D_{\sss \textrm{KL}}(p(V_{t}|x_t; \thetaE)\parallel p_{\sss Q}(V_{t}|x_t;\theta))
\end{equation}

Next we prove that the KL-divergence between state distribution is upper bounded by the trajectory distribution. Since a trajectory $\tau_t = \{x_t, x_{t+1}, ..., x_{t+K}\}$ contains full information of the resulting states (by neglecting the temporal information), for any joint distribution $P(x|\tau_t)$ and $Q(x|\tau_t)$, the following is true
\begin{equation}
    P(x| \tau_t) = Q(x|\tau_t)
\end{equation}
Upon applying Lemma \ref{lemma1} we have the inequality:
\begin{equation}\label{equ:state_bound}
    D_{\sss \textrm{KL}}(p_{t}(x|x_t; \thetaE) \parallel p_{t}(x|x_t;\theta)) \leq D_{\sss \textrm{KL}}(p(\tau_t|x_t; \thetaE)\parallel p(\tau_{t}|x_t;\theta)) 
\end{equation}

Therefore, given the KL-divergence between the control sequence distribution is upper bounded by $\epsilon$, we use the equality in equation (\ref{equ:traj_bound}) and (\ref{equ:state_bound}) to show that the KL-divergence between the resulting state distribution is also upper bounded by $\epsilon$.

\begin{equation}
         D_{\sss \textrm{KL}}(p_t(x|x_t; \thetaE)\parallel p_t(x|x_t;\theta)) \leq D_{\sss \textrm{KL}}(p(V_{t}|x_t; \thetaE)\parallel p_{\sss Q}(V_{t}|x_t;\theta))\leq \epsilon
\end{equation}
\end{proof}

\begin{definition}[One-step Recoverability \citep{feedbackIM}]
Assume that the state distribution of the learner and expert are different at time $t$, that is $D_{\sss \textrm{KL}}(p(x_t;\theta_E)\parallel p(x_t;\theta)) \neq 0$, there exists a policy $\pi_{re}$ that when used for the learner, can bound:
\begin{equation}
    D_{\sss \textrm{KL}}(p(x_{t+1};\theta_E)\parallel p(x_{t+1};\pi_{re})) \leq \epsilon_1
\end{equation}
where the current initial state distribution of the student follows $p(x_t;\theta)$.
\end{definition}
Intuitively, this condition requires that, no matter what is the current state distribution, the learner can recover to the expert demonstrated distribution in a single time-step. In our case, this is a natural condition since the difference in the initial state distribution $p(x_t;\theta)$ and $p(x_t; \theta_E)$ is not arbitrarily large: we use re-planning to ensure the receding state sequences is always bounded below $\epsilon$, hence this recoverability condition can be easily satisfied. We emphasize that this recoverable policy is \textbf{never} executed in our algorithm, it is only used for the theoretical analysis. 

Next, we derive a bound over the global state marginal distribution between two consecutive time steps. At each time step $t=0,..., T-1$, we re-optimize the local control sequence distribution and only execute the first control, hence we only change state density over a small reachable space. We define $\RHCStateDistribution^t(x, \theta)$ as the global marginal state distribution by applying RHC from $i=0,1, ..., t$ under $\theta$ and then using the recoverable policy $\pi_{re}$ to switch to the expert $\thetaE$ thereafter until $T-1$, \ie, $\RHCStateDistribution^t(x; \theta) = \frac{1}{T} (\sum_{i=0}^{t+1} p(x_i; \theta) + p(x_{t+2}; \pi_{re}) + \sum_{i=t+3}^{T} p(x_i; \thetaE))$. According to the definition of the recoverable policy, we have $D_{\sss \textrm{KL}}(p(x_{t+2};\theta_E)\parallel p(x_{t+2};\pi_{re})) \leq \epsilon_1$, therefore, $\RHCStateDistribution^t(x; \theta) \approx \frac{1}{T} (\sum_{i=0}^{t+1} p(x_i; \theta) + \sum_{i=t+2}^{T} p(x_i; \thetaE))$. To quantify the change in global state marginal distribution, we derive a bound for the KL-divergence between two consecutive time steps, \ie $D_{\sss \textrm{KL}}(\RHCStateDistribution^{t-1}(x;\theta)\parallel \RHCStateDistribution^{t}(x;\theta))$.

\begin{lemma}\label{lemma3}
     If the KL-divergence over resulting state density from the control sequence distribution of length $K$ are bounded by $\epsilon$, \ie $D_{\sss \emph{KL}}(p_t(x|x_t; \thetaE) \parallel p_t(x|x_t;\theta)) < \epsilon$, where $x_t$ is the state encountered by our policy at $t = 0, 1, ..., T-1$ and is one-step recoverable, then KL-divergence over the global state marginal distribution between two consecutive control executions are bounded by $\epsilon$,
    \begin{equation}
         D_{\sss \textrm{KL}}(\RHCStateDistribution^{t-1}(x;\theta)\parallel \RHCStateDistribution^{t}(x;\theta)) \leq \frac{K+1}{T}\epsilon
    \end{equation}\label{corollary4}
    for $t = 1, ..., T-1$.
\end{lemma}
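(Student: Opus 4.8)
The plan is to write both global state marginals $\RHCStateDistribution^{t-1}(x;\theta)$ and $\RHCStateDistribution^{t}(x;\theta)$ as uniform mixtures of $T$ per--time--step marginals and exploit the fact that the two mixtures differ on only one block of $K+1$ consecutive components. By construction $\RHCStateDistribution^{t}$ executes exactly one more committed RHC control than $\RHCStateDistribution^{t-1}$, and since that control comes from re-planning a horizon-$K$ sequence starting from the (one-step recoverable) state $x_t$, the only indices whose marginals change are those in the receding window $B_t=\{t,t+1,\ldots,t+K\}$. Outside $B_t$ the two mixtures use the same per-step marginals — this is precisely where the one-step recoverability assumption and the $\pi_{re}$ approximation in the definition of $\RHCStateDistribution^{t}$ enter, so that the two ``switch-to-expert'' tails line up (up to an $O(\epsilon_1/T)$ term we absorb). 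On $B_t$, $\RHCStateDistribution^{t-1}$ uses the expert-rollout marginals $p(x_i;\thetaE)$ and $\RHCStateDistribution^{t}$ uses the RHC-rollout marginals $p(x_i;\theta)$, $i\in B_t$.

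Next I would invoke the generalized log-sum inequality (equivalently, joint convexity of KL under mixtures with common weights): $D_{\sss \textrm{KL}}\bigl(\sum_j\alpha_jP_j\parallel\sum_j\alpha_jQ_j\bigr)\le\sum_j\alpha_jD_{\sss \textrm{KL}}(P_j\parallel Q_j)$. Grouping the $T$ components into the single super-component $B_t$ (total weight $\tfrac{K+1}{T}$) and the $T-K-1$ remaining singletons (each weight $\tfrac1T$), every singleton outside $B_t$ has identical $P$- and $Q$-parts and contributes $0$, while the block $B_t$ has $P$-part $\frac{1}{K+1}\sum_{i\in B_t}p(x_i;\thetaE)=p_t(x|x_t;\thetaE)$ and $Q$-part $\frac{1}{K+1}\sum_{i\in B_t}p(x_i;\theta)=p_t(x|x_t;\theta)$ — exactly the horizon-$K$ local state distributions of Lemma~\ref{lemma2}. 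Hence
\begin{equation*}
D_{\sss \textrm{KL}}\bigl(\RHCStateDistribution^{t-1}(x;\theta)\parallel\RHCStateDistribution^{t}(x;\theta)\bigr) \le \frac{K+1}{T}\, D_{\sss \textrm{KL}}\bigl(p_t(x|x_t;\thetaE)\parallel p_t(x|x_t;\theta)\bigr) < \frac{K+1}{T}\,\epsilon ,
\end{equation*}
the last step being the hypothesis, which in turn follows from Lemma~\ref{lemma2} applied to the $\epsilon$-bounded control-sequence divergence.

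I expect the main obstacle to be the bookkeeping in the first step rather than the inequality itself: one must pin down exactly which time indices are perturbed when one additional RHC step is committed and its re-planned suffix is used to warm-start, and one must control the mismatch incurred by replacing the true post-switch rollout with the recoverable policy $\pi_{re}$ so that the ``$\approx$'' in the definition of $\RHCStateDistribution^{t}$ is made rigorous (or the statement is restated with an explicit additive $O(\epsilon_1)$ slack). A secondary point to watch is orientation: KL is asymmetric, so I must keep the learner distribution in the second argument throughout and apply the convexity bound in the direction that yields an upper bound; since Lemma~\ref{lemma2} is already stated with this orientation, the chain closes consistently.
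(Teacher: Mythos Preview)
Your overall strategy---write both global marginals as uniform mixtures over $T$ per-step components, apply the generalized log-sum / joint-convexity bound, and reduce to the local horizon-$K$ KL---is exactly the paper's route. The gap is in the identification of the block components.

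By the paper's definition $\RHCStateDistribution^{t}(x;\theta)\approx\tfrac1T\bigl(\sum_{i=0}^{t+1}p(x_i;\theta)+\sum_{i=t+2}^{T}p(x_i;\thetaE)\bigr)$, so $\RHCStateDistribution^{t-1}$ and $\RHCStateDistribution^{t}$ differ at \emph{one} index only, namely $i=t+1$; on $i=t+2,\ldots,t+K$ both already use expert marginals $p(x_i;\thetaE)$. Hence your claim that on $B_t$ the $Q$-part equals $\tfrac1{K+1}\sum_{i\in B_t}p(x_i;\theta)=p_t(x|x_t;\theta)$ is false: the actual $Q$-block is the ``one-step-execution'' mixture $\tfrac1{K+1}\bigl(p(x_t;\theta)+p(x_{t+1};\theta)+\sum_{i=t+2}^{t+K}p(x_i;\thetaE)\bigr)$. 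Consequently the convexity bound does \emph{not} land you directly on $D_{\sss\textrm{KL}}\bigl(p_t(x|x_t;\thetaE)\parallel p_t(x|x_t;\theta)\bigr)$; it lands on the KL between the expert window and this one-step version.

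The paper closes this with an additional intermediate inequality (its equation~\eqref{equ:subsequence_bound}): it argues that
\[
D_{\sss\textrm{KL}}\Bigl(p_t(x|x_t;\thetaE)\,\Big\|\,\tfrac1{K+1}\bigl(p(x_t;\theta)+p(x_{t+1};\theta)+\textstyle\sum_{i=t+2}^{t+K}p(x_i;\thetaE)\bigr)\Bigr)
\;\le\;
D_{\sss\textrm{KL}}\bigl(p_t(x|x_t;\thetaE)\,\|\,p_t(x|x_t;\theta)\bigr)\;\le\;\epsilon,
\]
i.e.\ replacing the one-step-$\theta$ second argument by the full-window-$\theta$ second argument can only increase the divergence. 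This is the step your proposal is missing, and it is precisely where the one-step recoverability assumption and the warm-start/replanning bookkeeping you flag as ``the main obstacle'' actually enter. Without it, the chain from the block-KL to the hypothesis $\epsilon$ does not close.
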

We state the generalized log sum inequality below in lemma \ref{lemma4}, the proof can be found in the Appendix of \citep{f_D}. Lemma \ref{lemma4} and \ref{lemma5} will be used in the proof for Lemma \ref{lemma3}.
\begin{lemma}[Generalized log sum inequality\citep{f_D}]\label{lemma4}
     Let $p_1, ..., p_n$ and $q_1, ..., q_n$ be non-negative numbers. Let $p = \sum_{i=1}^n p_i$ and $q = \sum_{i=1}^n q_i$. Let $f(.)$ be a convex function. We have the following:
     \begin{equation}
       \sum_{i=1}^n q_i f(\frac{p_i}{q_i}) \ge qf(\frac{p}{q})
     \end{equation}
\end{lemma}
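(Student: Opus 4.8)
The plan is to recognize Lemma~\ref{lemma4} as essentially Jensen's inequality applied after renormalizing the weights $q_i$, together with a careful treatment of vanishing denominators. First I would dispose of the degenerate case $q=0$: then every $q_i=0$, and under the usual convention $0\cdot f(0/0)=0$ both sides vanish, so the inequality holds trivially. I would likewise discard any index with $q_i=0$ using the same convention, so that in the main argument one may assume $q>0$ and $q_i>0$ for all $i$.

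In the nondegenerate case, set $\alpha_i := q_i/q$, so that $\alpha_i\ge 0$ and $\sum_{i=1}^n\alpha_i=1$; the vector $(\alpha_i)$ is thus a probability distribution. Applying Jensen's inequality to the convex $f$ at the points $p_i/q_i$ with weights $\alpha_i$ gives
\[
  \sum_{i=1}^n \alpha_i\, f\!\left(\frac{p_i}{q_i}\right) \;\ge\; f\!\left(\sum_{i=1}^n \alpha_i\,\frac{p_i}{q_i}\right) = f\!\left(\frac{1}{q}\sum_{i=1}^n p_i\right) = f\!\left(\frac{p}{q}\right),
\]
and multiplying through by $q>0$ yields the claimed bound $\sum_i q_i\, f(p_i/q_i)\ge q\, f(p/q)$. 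Specializing $f(x)=x\log x$ recovers the classical log sum inequality, which is the form actually invoked in the proof of Lemma~\ref{lemma3}.

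A slightly more robust route, which I would prefer because it handles the boundary conventions more cleanly, is to argue through the perspective function $\tilde f(a,b):=b\,f(a/b)$ for $b>0$, extended by $\tilde f(a,0):=a\,f'_\infty$ (the recession slope of $f$) and $\tilde f(0,0):=0$. The perspective of a convex function is jointly convex and positively homogeneous of degree one, hence subadditive: $\tilde f(a_1+a_2,b_1+b_2)\le \tilde f(a_1,b_1)+\tilde f(a_2,b_2)$. Induction on $n$ then gives $\tilde f\bigl(\sum_i p_i,\sum_i q_i\bigr)\le \sum_i\tilde f(p_i,q_i)$, i.e. $q\,f(p/q)\le\sum_i q_i\,f(p_i/q_i)$. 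I do not expect a genuine obstacle here: the result is elementary and coincides with the statement proved in the appendix of \citep{f_D}. The only subtlety worth care is the bookkeeping for indices with $q_i=0$ (equivalently, the right convention for $0\cdot f(\cdot/0)$), which the perspective-function argument sidesteps by building the zero cases into the definition of $\tilde f$.
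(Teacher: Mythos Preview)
Your argument is correct: normalizing the $q_i$'s and applying Jensen's inequality to the convex $f$ is exactly the standard proof, and your perspective-function variant is a clean way to handle the $q_i=0$ boundary cases. Note that the paper itself does not actually prove Lemma~\ref{lemma4}; it merely states the result and defers the proof to the appendix of~\citep{f_D}, so there is no in-paper argument to compare against---but the proof there is the same Jensen-based one you give.
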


\begin{lemma}\label{lemma5}
Let $p(x)$ and $q(x)$ be non-negative functions, and $c$ is a constant factor. We have the following:
\begin{equation}
     \int cp(x) \log \frac{cp(x)}{cq(x)} = c \int p(x) \log \frac{p(x)}{q(x)}
\end{equation}
\end{lemma}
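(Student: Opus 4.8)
The claim is essentially immediate, so the plan is short: cancel the common factor inside the logarithm, then invoke linearity of the integral.

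First I would observe that, since $c$ is a constant (necessarily $c>0$ for the scaled functions $cp(x)$, $cq(x)$ to remain non-negative and for the quotient and logarithm to be well-defined), the argument of the logarithm simplifies pointwise, $\frac{cp(x)}{cq(x)} = \frac{p(x)}{q(x)}$, on the support where $q(x)>0$; on the complement the convention $0\log 0 = 0$ inherited from the surrounding relative-entropy expressions makes both sides of the identity agree. Hence the left-hand integrand equals $c\,p(x)\log\frac{p(x)}{q(x)}$.

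Second I would pull the constant $c$ out of the integral by linearity, giving $\int c\,p(x)\log\frac{p(x)}{q(x)}\,dx = c\int p(x)\log\frac{p(x)}{q(x)}\,dx$, which is exactly the right-hand side (and the right integral exists if and only if the left one does). Conceptually this is just the fact that the generalized relative entropy $\int p\log(p/q)$ of two non-negative, not-necessarily-normalized measures scales linearly when both measures are multiplied by the same positive constant.

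There is no real obstacle here. The only reason to isolate the identity as a lemma is bookkeeping in the proof of Lemma~\ref{lemma3}: there $c$ plays the role of a mixture weight such as $\tfrac{1}{T}$ or $\tfrac{1}{K+1}$ multiplying a block of single-step state densities, and one wants to strip it off cleanly before applying the generalized log-sum inequality (Lemma~\ref{lemma4}); the identity above is precisely what licenses that step.
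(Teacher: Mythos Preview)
Your proposal is correct and matches the paper's own proof exactly: cancel the common factor $c$ inside the logarithm, then pull the constant $c$ through the integral by linearity. The paper's proof is literally the one-line chain $\int cp(x)\log\frac{cp(x)}{cq(x)}\,dx = \int cp(x)\log\frac{p(x)}{q(x)}\,dx = c\int p(x)\log\frac{p(x)}{q(x)}\,dx$, so there is nothing to add.
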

\begin{proof}
\begin{equation}
    \int cp(x) \log \frac{cp(x)}{cq(x)}dx = \int cp(x) \log \frac{p(x)}{q(x)} dx = c\int p(x) \log \frac{p(x)}{q(x)} dx 
\end{equation}
\end{proof}

Now, we are ready to prove lemma \ref{lemma3}.
\begin{proof}
For each $t=0, ..., T-1$, we re-plan for the optimal local control sequence start at $x_t$ so that the resulting state distribution over horizon $K$ is bounded, \ie $D_{\sss \textrm{KL}}(p_{t}(x|x_t;\thetaE)\parallel p_{t}(x|x_t;\theta))<\epsilon$, where $p_t(x|x_t; \theta) = \frac{1}{K+1}\sum_{i=t}^{t+K}p(x_i;\theta)$. However, instead of executing all $K$ controls in the sequence, we only execute the first control at the current time step $t$ and change the state distribution $p(x_{t+1}; \theta)$ reachable for that single time step, then we use the remaining control sequence to warm start the local control sequence optimization for the next time step. To account the effect of replanning, for each time step $t$, since we do not change the state distribution after $p(x_{t+1};\theta)$, we can think of the change in the global state distribution as if we follow the optimal control under $\theta$ at time step $t$ and then use the recoverable policy $\pi_{re}$ to switch to $\thetaE$ afterwards over the control sequence horizon $K$. Hence, the actual state density is $\frac{1}{K+1}(p(x_t; \theta)+p(x_{t+1};\theta)+ p(x_{t+2};\pi_{re})+\sum_{i=t+3}^{t+K}p(x_i;\thetaE))$. Theoretically, since we do not query the expert online, the initial state $x_t$ distribution in theorem \ref{th:StateMarginalError} and lemma \ref{lemma2} should follow the expert demonstration at time $t$, \ie $p(x_t; \theta_E)$. However, our MPC controller cannot jump to this distribution and we replan from our current state distribution $p(x_t; \theta)$. To resolve this mismatch, we require the recoverability condition in our optimization procedure such that, for each resulting state from the controller, there always exists a one-step recoverable policy $\pi_{re}$ that can correct the current state distribution $p(x_t; \theta)$ to $p(x_{t+1}; \theta_E)$ in one step. Therefore, with this one-step recoverable condition on every state $x_t$ induced by the cost function parameterized by $\theta$, $x_t$ in theorem \ref{th:StateMarginalError} and lemma \ref{lemma2} now follows the state distribution of our controller, \ie $p(x_t; \theta)$.

That is,
\begin{align}
     & D_{\sss \textrm{KL}}(p_t(x|x_t;\thetaE)\parallel \frac{1}{K+1}(p(x_t)+p(x_{t+1};\theta)+ \sum_{i=t+2}^{t+K}p(x_i;\thetaE)) \nonumber\\
    = & D_{\sss \textrm{KL}}(\frac{1}{K+1}(p(x_t;\theta)+ p(x_{t+1}; \pi_{re})+ \sum_{i=t+2}^{t+K} p(x;\thetaE))\parallel \frac{1}{K+1}(p(x_t;\theta)+p(x_{t+1};\theta)+ \sum_{i=t+2}^{t+K}p(x_i;\thetaE)) \nonumber\\
     \leq & D_{\sss \textrm{KL}}(\frac{1}{K+1}(p(x_t;\theta)+\sum_{i=t+1}^{t+K} p(x;\thetaE))\parallel \frac{1}{K+1}(p(x_t;\theta)+ \sum_{i=t+1}^{t+K}p(x_i;\theta)) \nonumber\\
    = &  D_{\sss \textrm{KL}}(p_{t}(x|x_t;\thetaE)\parallel p_{t}(x|x_t;\theta))\leq\epsilon \label{equ:subsequence_bound}
\end{align}

Recall that we use $\RHCStateDistribution^t(x; \theta)$ to account for the global state marginal distribution resulted from executing a single optimal control at time step $t$. More specifically, $\RHCStateDistribution^t(x; \theta)$ is defined as the state marginal distribution by executing only the first optimal control from the replanned optimal control sequence at each time step from $i = 0, ..., t$ under $\theta$ and switching to $\thetaE$ thereafter by using the recoverable policy $\pi_{re}$ until $T-1$, \ie $\RHCStateDistribution^t(x;\theta) \approx \frac{1}{T}(\sum_{i=0}^{t+1}p(x_i; \theta) + \sum_{i=t+2}^{T} p(x_i; \thetaE))$. We bound the KL-divergence over global state distribution between two consecutive time step  as follow:
\begin{align}\label{equ:derive_state_bound}
    &D_{\sss \textrm{KL}}(\RHCStateDistribution^{t-1}(x;\theta)\parallel \RHCStateDistribution^t(x;\theta))\nonumber \\
 = &D_{\sss \textrm{KL}}(\frac{1}{T}(\sum_{i=0}^{t}p(x_i; \theta) + p(x_{t+1}; \pi_{re}) + \sum_{i=t+2}^{T} p(x_i; \thetaE)) \parallel \frac{1}{T}(\sum_{i=0}^{t+1}p(x_i; \theta) + p(x_{t+2}; \pi_{re}) + \sum_{i=t+3}^{T}p(x_i; \thetaE)))\nonumber\\
 \approx &  D_{\sss \textrm{KL}}(\frac{1}{T}(\sum_{i=0}^{t}p(x_i; \theta) + \sum_{i=t+1}^{T} p(x_i; \thetaE)) \parallel \frac{1}{T}(\sum_{i=0}^{t+1}p(x_i; \theta) + \sum_{i=t+2}^{T}p(x_i; \thetaE)))\nonumber\\
= & \frac{1}{T} \int (\sum_{i=0}^{t}p(x_i; \theta)+ \sum_{i=t+1}^{T} p(x_i; \thetaE)) \log \frac{\sum_{i=0}^{t}p(x_i; \theta)+ \sum_{i=t+1}^{} p(x_i; \thetaE)}{\sum_{i=0}^{t+1}p(x_i; \theta) + \sum_{i=t+2}^{T}p(x_i; \thetaE)} dx\nonumber\\ 
     \leq &  \frac{1}{T}(\int \sum_{i=0}^{t-1}p(x_i; \theta) \log \frac{\sum_{i=0}^{t-1}p(x_i; \theta) }{\sum_{i=0}^{t-1}p(x_i; \theta)} dx \nonumber \\
    & + \int (p(x_t; \theta)+ \sum_{i=t+1}^{t+K} p(x_i; \thetaE)) \log \frac{  p(x_t; \theta)+ \sum_{i=t+1}^{t+K} p(x_i; \thetaE)}{ p(x_t; \theta)+p(x_{t+1}; \theta) + \sum_{i=t+2}^{t+K}p(x_i; \thetaE)} dx \nonumber \\
     & +  \int \sum_{i=t+K+1}^{T} p(x_i; \thetaE)\log \frac{\sum_{i=t+K+1}^{T} p(x_i; \thetaE)}{\sum_{i=t+K+1}^{T}p(x_i; \thetaE)} dx)\nonumber \\
    = & \frac{1}{T}\int (p(x_t; \theta)+ \sum_{i=t+1}^{t+K} p(x_i; \thetaE)) \log \frac{  p(x_t; \theta) + \sum_{i=t+1}^{t+K} p(x_i; \thetaE)}{ p(x_t; \theta)+p(x_{t+1}; \theta) + \sum_{i=t+2}^{t+K}p(x_i; \thetaE)} dx \nonumber\\
 = & \frac{K+1}{T}\int \frac{1}{K+1} (p(x_t; \theta) + \sum_{i=t+1}^{t+K} p(x_i; \thetaE)) \log \frac{  \frac{1}{K+1} ( p(x_t; \theta) + \sum_{i=t+1}^{t+K} p(x_i; \thetaE))}{  \frac{1}{K+1} (p(x_t; \theta)+p(x_{t+1}; \theta) + \sum_{i=t+2}^{t+K}p(x_i; \thetaE))} dx \nonumber\\
     = & \frac{K+1}{T}D_{\sss \textrm{KL}}(\frac{1}{K+1}(p(x_t; \theta) +\sum_{i=t+1}^{t+K}p(x_i;\thetaE))\parallel \frac{1}{K+1}(p(x_t; \theta) +
     p(x_{t+1}; \theta) + \sum_{i=t+2}^{t+K}p(x_i; \thetaE))) \nonumber\\
     = & \frac{K+1}{T}D_{\sss \textrm{KL}}(p_{t}(x|x_t;\thetaE)\parallel \frac{1}{K+1}(p(x_t; \theta) +
     p(x_{t+1}; \theta) + \sum_{i=t+2}^{t+K}p(x_i; \thetaE))) \nonumber\\
 \leq &  \frac{K+1}{T}\epsilon
\end{align}
The first equality in equation (\ref{equ:derive_state_bound}) follows the definition of $\RHCStateDistribution^t(x;\theta)$ and the second line follows the definition of the recoverable policy $\pi_{re}$. Then, we use lemma \ref{lemma5} to factor out $\frac{1}{T}$ in the third line. The next inequality follows from the generalized log sum inequality stated in lemma \ref{lemma4}, and we have the first and third terms reduce to 0 and are left with the second term in the next line. We apply lemma \ref{lemma5} again to the integral using the constant factor $\frac{1}{K+1}$. In addition, to make the equality hold, we multiply the inverse of the constant factor $K+1$ outside the integral. We observe the integral is now the KL divergence between the expert $p_{t}(x|x_t;\thetaE)$ and one-step-execution of our policy $\frac{1}{K+1}(p(x_t; \theta) + p(x_{t+1}; \theta) + \sum_{i=t+2}^{t+K}p(x_i; \thetaE))$. The final inequality follows from the bound derived in equation \eqref{equ:subsequence_bound}.

For $t=0$, we have $\RHCStateDistribution^0(x;\theta) = \frac{1}{T}(p(\stateinit; \theta) + \sum_{i=1}^{T-1}p(x_i; \thetaE))$. Since the initial state $\stateinit$ for expert and our policy are sampled from the same initial state distribution $\mu$, $p(x_0)$ is independent of $\theta$, \ie $p(\stateinit;\theta) = p(\stateinit;\thetaE) $. Therefore, $\RHCStateDistribution(x;\thetaE) = p(x_0)+ \sum_{i=1}^{T-1}p(x_i; \thetaE) = \RHCStateDistribution^0(x;\theta)$. Moreover, the final global state marginal distribution $\RHCStateDistribution(x; \theta)$ is the same as the $\RHCStateDistribution^{T-1}(x;\theta)$, \ie $\RHCStateDistribution(x; \theta) = \sum_{i=0}^{T-1} p(x_i; \theta) = \RHCStateDistribution^{T-1}(x; \theta)$. For any $t=1,2, .. , T-1$, we have proved $D_{\sss \textrm{KL}}(\RHCStateDistribution^{t-1}(x;\theta)\parallel \RHCStateDistribution^t(x;\theta)) \leq \frac{K+1}{T}\epsilon$.

\end{proof}

Finally, we are prepared to prove theorem \ref{th:StateMarginalError}. 
\begin{proof}
We evaluate the TV distance over the state marginal distribution between the expert policy and our control law. 
\begin{align} \label{equ:TV}
    D_{\sss \textrm{TV}}(\RHCStateDistribution(x; \thetaE)\parallel \RHCStateDistribution(x; \theta)) & =  D_{\sss \textrm{TV}}(\RHCStateDistribution^0(x; \theta)\parallel \RHCStateDistribution^{T-1}(x; \theta))\nonumber\\
    & \leq \sum_{t=1}^{T-1} D_{\sss \textrm{TV}}(\RHCStateDistribution^{t-1}(x; \theta)\parallel \RHCStateDistribution^{t}(x; \theta))
\end{align}
The first equality in equation (\ref{equ:TV}) follows from the fact that $\RHCStateDistribution(x;\thetaE) = \RHCStateDistribution^0(x;\theta)$ and $\RHCStateDistribution(x; \theta) = \RHCStateDistribution^{T-1}(x; \theta)$. We use triangle inequality of the TV distance measures to obtain the inequality in the second line. 

Recall that by Pinsker's inequality, the total variation (TV) distance is related to Kullback–Leibler (KL) divergence by the following inequality: ${D_{\mathrm {TV} } (P\parallel Q)\leq {\sqrt {{\frac {1}{2}}D_{\mathrm {KL} }(P\parallel Q)}}.}$  We apply Pinsker's inequality to each of the TV terms in the second line of equation (\ref{equ:TV}) to bound them by a summation of KL-divergence as shown in the first line of equation (\ref{equ:TV2KL}). Next, given the control sequence distribution for every time step is bounded by $\epsilon$, we apply lemma \ref{lemma2} to show that the resulting state distribution from the optimal control sequences for each time step $t$ is also bounded by $\epsilon$. Next, we use this result and apply Lemma \ref{lemma3} to bound the KL-divergence over the global state marginal distribution between two consecutive time steps by $\frac{K+1}{T}\epsilon$. Second line in equation (\ref{equ:TV2KL}) follows from this result and finally we derive the final bound linear in $T$.
\begin{align} \label{equ:TV2KL}
   D_{\sss \textrm{TV}}(\RHCStateDistribution(x; \thetaE)\parallel \RHCStateDistribution(x; \theta))
    & \leq \sum_{t=1}^{T-1} \sqrt {\frac {1}{2}D_{\sss \textrm{KL}}(\RHCStateDistribution^{t-1}(x; \theta)\parallel \RHCStateDistribution^{t}(x; \theta))} \nonumber\\
    & \leq \sum_{t=1}^{T-1} \sqrt {\frac{(K+1)\epsilon}{2T}}\nonumber\\
    & = (T-1) \sqrt{\frac{K+1}{T}}\sqrt{\epsilon/2}\nonumber  \\
    & \leq T\sqrt{\epsilon/2} 
\end{align}
The last line follows from the fact that $K << T$, so $\sqrt{\frac{K+1}{T}} <1$.
\end{proof}

\section{Extension to Stochastic Dynamics}\label{Appendix:extension_stochastic}

RHIRL optimizes the trajectories in the space of control sequences $p(V)$, whereas $V = \{u_0, u_1, u_2, ..., u_{K-1}\}$ is a sequence of controls. If the system is deterministic, we can apply $V$ to the dynamical system $ x_{t+1} = f(x_t, v_t)$ with start state  $x_0$ and obtain a state sequence $\tau=(x_0, x_1, \ldots, x_{ K})$. We recall that total state trajectory cost of $V$ defined in equation \eqref{equ:statecost} as follows:
\begin{equation}
    S(V, x_0;\theta) = \sum_{k = 0}^{K} g(x_k;\theta) \nonumber
\end{equation}
We use the information-theoretic MPC (MPPI) \citep{MPPI} to solve for an optimal control sequence distribution at the current start state $x_t$ in iteration $t$. The main result of MPPI suggests that, under a deterministic system, the optimal control sequence distribution $Q$ minimizes the ``free energy'' of the dynamical system and  this  free energy can be calculated from the cost of the state trajectory  under $Q$.  Mathematically, the probability density~$p_{Q}(V^*)$, as shown in equation \eqref{equ:optimal_dis} can be 
expressed as a function of the state cost $S(V, x_t;\theta) $,  with respect to a Gaussian ``base'' distribution $B(V_B, \Sigma)$ that depends on the control cost:
\begin{equation}
    p_{ Q}(V^*|U_{\sss B}, \Sigma, x_t;\theta) = {1 \over Z}\,{p_{\sss B}(V^*|U_{\sss B}, \Sigma) \exp(-\frac{1}{\lambda} S(V^*, x_t;\theta))},  \nonumber
\end{equation}
where $Z$ is the partition function. Intuitively, this result shows that the control sequence $V$ that results in lower state-trajectory cost $S(V)$ are exponentially more likely to be chosen.

In this section, we extend RHIRL to stochastic dynamics where $x_{t+1} = f (x_t, v_t, \omega_t)$, where $\omega_t$ is a random variable that
models the independent system noise. More specifically, we assume that $x_{t+1} \sim p(x_{t+1}|x_t, v_t)$. Due to the stochasticity of the dynamics, the state trajectory cost in equation (\ref{equ:statecost}) and the optimal control distribution in equation (\ref{equ:optimal_dis}) are affected. Therefore, we first redefine the trajectory state cost under stochastic dynamics, then derive the counterpart of equation (\ref{equ:optimal_dis}) for the optimal control sequence distribution under the stochastic dynamics, finally we adapt our existing RHIRL algorithm to stochastic dynamics.

\subsection{State Trajectory Cost}

Due to the stochasticity of the dynamics, given the initial state $x_0$, we no longer have a one-to-one mapping from the control sequence $V$ to the resulting state trajectory $\tau = (x_0, x_1, ..., x_K)$. Instead, we have a distribution of state trajectories:
\begin{equation}  \label{equ:traj_dist}
    p(\tau|x_0, V) = \prod_{t=0}^{K-1} p(x_{t+1}|x_t, v_t)
\end{equation}

To accommodate this change, the trajectory state cost of a control sequence $ \tilde{S}(V, x_0;\theta)$ is defined over the distribution of the resulting state trajectories, instead of single trajectory:
 \begin{align} 
    \tilde{S}(V, x_0;\theta) & = \int p(\tau|x_0, V) S(\tau|x_0; \theta)d\tau \\
    & = \int \prod_{t=0}^{K-1} p(x_{t+1}|x_t, v_t)\sum_{t = 0}^{K} g(x_t;\theta) d\tau \label{equ:cost_sto}
\end{align}

We always measure the preferences over the control sequence $V$ by their resulting state trajectories $\tau$. Hence, when the resulting trajectories changes from a single deterministic sequence of states to a distribution of state trajectories, we adapt our measure of the resulting cost: under the deterministic dynamics where each $V$ uniquely maps to the same state trajectory $\tau$, the state trajectory cost is the cost of that specific trajectory; while under the stochastic dynamics where the same control sequence $V$ maps to a  distribution of $\tau$, the state trajectory cost of a control sequence is now defined as the expected state cost of the distribution of trajectory. We measure the state trajectory cost of a control sequence $S(V, x_0; \theta)$, instead of simply a state trajectory cost on the states itself $S(\tau, x_0;\theta)$, because we want to use this measure to directly optimize the control sequence.

\subsection{Optimal Control Sequence Distribution}

Next, we derive the optimal control sequence distribution under the stochastic dynamics. Our derivation is based on MPPI \citep{MPPI}, which uses the ``free-energy'' principle to derive the optimal control sequence distribution under deterministic dynamics. 

\begin{definition}[Free-energy (\citep{relative_energy}, Definition 1]
Let $\mathbb{P} \in \mathcal{P}(\mathcal{Z})$ and the function $\mathcal{J}(x)$:
$\mathcal{Z} \to \mathbb{R}$ be a measurable function. The the term:
\begin{equation}
    \mathbb{E}(\mathcal{J}(x)) = \log \int \exp(\rho \mathcal{J}(x)) d\mathbb{P}
\end{equation}
is called free energy of $\mathcal{J}(x)$ with respect to $\mathbb{P}$, $\rho$ is a constant.
\end{definition}

Now we have the free-energy of a control system under stochastic dynamics as stated below. It has a ``Gaussian'' base control sequence distribution $B(U_{\sss B}, \Sigma)$ such that its control sequence distribution follows $p_{\sss B}(V^*|U_{\sss B}, \Sigma)$ whereas $\Sigma$ is the Gaussian control noise covariance matrix. $\tilde{S}(V; \theta)$ denotes the state trajectory cost function.

\begin{equation}
    \mathcal{F}(\tilde{S}, p_{\sss B}, x_0, \lambda;\theta) = \log (\mathbb{E}_{p_{\sss B}}[\exp (-\frac{1}{\lambda}\tilde{S}(V, x_0; \theta))]),
\end{equation}
$\lambda \in \mathbb{R}^{+}$ is the inverse temperature of the control system.

Suppose now we have another control sequence distribution with probability measure $p(V)$ and these two distributions are absolutely continuous, then we can rewrite the free-energy w.r.t  $p_{\sss B}(V)$ using the expectation over the density of $p(V)$ use the standard importance sampling trick:
\begin{align}
      \mathcal{F}(\tilde{S}, p_{\sss B}, x_0, \lambda; \theta) &= \log (\mathbb{E}_{p_{\sss B}}[\exp (-\frac{1}{\lambda}\tilde{S}(V, x_0; \theta))]) \\
      &= \log (\mathbb{E}_{p_{\sss B}}[\exp (-\frac{1}{\lambda}\tilde{S}(V, x_0; \theta)\frac{p_{\sss B}(V^*|U_{\sss B}, \Sigma)}{p(V)})]) \\
      & \ge  \mathbb{E}_{p}[\log(\exp (-\frac{1}{\lambda}\tilde{S}(V, x_0; \theta)\frac{p_{\sss B}(V^*|U_{\sss B}, \Sigma)}{p(V)}))] \\
      & = -\frac{1}{\lambda}\mathbb{E}_{p}[\tilde{S}(V, x_0; \theta)+\lambda \log(\frac{p(V)}{p_{\sss B}(V^*|U_{\sss B}, \Sigma)})] \\
      & = -\frac{1}{\lambda}(\mathbb{E}_{p}[\tilde{S}(V, x_0; \theta)]+\lambda \mathbb{E}_{p}[\log(\frac{p(V)}{p_{\sss B}(V^*|U_{\sss B}, \Sigma)})]) \\
      & = -\frac{1}{\lambda}(\mathbb{E}_{p}[\tilde{S}(V, x_0; \theta)]+\lambda D_{\sss \textrm{KL}}(p(V)|| p_{\sss B}(V^*|U_{\sss B}, \Sigma)))
\end{align}

Therefore, we have 

\begin{equation} \label{equ:bound}
 \mathcal{F}(S, p_{\sss B}, x_0, \lambda;\theta) \ge  -\frac{1}{\lambda}(\mathbb{E}_{p}[\tilde{S}(V, x_0; \theta)]+\lambda D_{\sss \textrm{KL}}(p(V)|| p_{\sss B}(V^*|U_{\sss B}, \Sigma)))
\end{equation}
The right-hand side is the lower bound of the free-energy of the control system. We use $p_Q(V)$ to denote the optimal control sequence distribution. This distribution is only optimal if and only if the bound in the equation above is tight, i.e. $ \mathcal{F}(\tilde{S}, p_{\sss B}, x_0, \lambda;\theta) = -\frac{1}{\lambda} \mathbb{E}_{p_Q}[\tilde{S}(V, x_0; \theta)]+\lambda D_{\sss \textrm{KL}}(p_Q(V)|| p_{\sss B}(V^*|U_{\sss B}, \Sigma))$. 

We claim that the optimal control sequence distribution $\tilde{p_Q}$ under the stochastic dynamics is as follows:
\begin{equation} \label{equ:stochastic_obj}
    \tilde{p_Q}(V^*|U_{\sss B}, \Sigma, x_0; \theta) = {1 \over Z}\,{ \exp(-\frac{1}{\lambda} \tilde{S}(V, x_0; \theta))p_{\sss B}(V^*|U_{\sss B}, \Sigma)}, 
\end{equation}
whereas $Z = \int \exp(-\frac{1}{\lambda} \tilde{S}(V, x_0;\theta))p_{\sss B}(V^*|U_{\sss B}, \Sigma) dV$ is the partition function.

We prove that equation (\ref{equ:stochastic_obj}) is the optimal control sequence distribution under stochastic dynamics by showing that this $\tilde{p_Q}(V^*|U_{\sss B}, \Sigma, x_0; \theta)$ tightens the bound of free-energy in equation (\ref{equ:bound}). We substitute $\tilde{p_Q}(V^*|U_{\sss B}, \Sigma, x_0; \theta)$ into the RHS of equation (\ref{equ:bound}) and simplify the expression of the KL-divergence:

\begin{align}
    \mathcal{F}(\tilde{S}, p_{\sss B}, x_0, \lambda; \theta) & \ge  -\frac{1}{\lambda}(\mathbb{E}_{\tilde{p_Q}}[\tilde{S}(V, x_0; \theta)]+\lambda D_{\sss \textrm{KL}}(\tilde{p_Q}(V^*|U_{\sss B}, \Sigma, x_0; \theta)|| p_{\sss B}(V^*|U_{\sss B}, \Sigma)))\\
    & = -\frac{1}{\lambda}(\mathbb{E}_{\tilde{p_Q}}[\tilde{S}(V, x_0; \theta)]+\lambda \mathbb{E}_{\tilde{p_Q}}[\log(\frac{\tilde{p_Q}(V^*|U_{\sss B}, \Sigma, x_0; \theta)}{p_{\sss B}(V^*|U_{\sss B}, \Sigma)})]) \\
     & = -\frac{1}{\lambda}\mathbb{E}_{\tilde{p_Q}}[\tilde{S}(V, x_0; \theta)] - \mathbb{E}_{\tilde{p_Q}}[\log(\frac{ {1 \over Z}\,{ \exp(-\frac{1}{\lambda} \tilde{S}(V, x_0; \theta))p_{\sss B}(V^*|U_{\sss B}, \Sigma)}}{p_{\sss B}(V^*|U_{\sss B}, \Sigma)})] \\
      & = -\frac{1}{\lambda}\mathbb{E}_{\tilde{p_Q}}[\tilde{S}(V, x_0; \theta)] - ({1 \over \lambda}\mathbb{E}_{\tilde{p_Q}}[\tilde{S}(V, x_0; \theta)] - \log(Z)) 
\end{align}

Next we substitute the expression for the partition function $Z$ and we found that the RHS is exactly the definition of the the free-energy of the control system with base distribution $B(U_{\sss B}, \Sigma)$:
\begin{align}
      \mathcal{F}(\tilde{S}, p_{\sss B}, x_0, \lambda; \theta)  & \ge \log(Z) \\
      & = \log (\int \exp(-\frac{1}{\lambda} \tilde{S}(V, x_0; \theta))p_{\sss B}(V^*|U_{\sss B}, \Sigma) dV)  \\
      & = \log  (\mathbb{E}_{p_{\sss B}}[\exp (-\frac{1}{\lambda}\tilde{S}(V, x_0; \theta))]) \\
      & =   \mathcal{F}(\tilde{S}, p_{\sss B}, x_0, \lambda; \theta) 
\end{align}
The final equality forces the inequality to be tight. Therefore, $p_Q(V^*|U_{\sss B}, \Sigma, x_0; \theta)$ in equation (\ref{equ:stochastic_obj}) is the optimal control sequence distribution under the stochastic dynamics.

We observe that the optimal control sequence distribution under deterministic system $p_Q(V^*|U_{\sss B}, \Sigma, x_0; \theta)$ in equation (\ref{equ:optimal_dis}) and that under the stochastic dynamics $ \tilde{p_Q}(V^*|U_{\sss B}, \Sigma, x_0; \theta)$ in equation (\ref{equ:stochastic_obj}) only differs in the calculation of the state trajectory cost of the control sequences. Intuitively, it means that under deterministic dynamics, we choose the control sequence $V$ that will, for sure, leads to a state trajectory with lower cost; while when we extend to stochastic dynamics, the control sequence $V$ that results in lower state-trajectory cost $S(V)$ in expectation are exponentially more likely to be chosen. Practically, now we need more samples for a single control sequence to compute the expectation in equation (\ref{equ:cost_sto}).

\subsection{RHIRL under Stochastic Dynamics}

Next, we adapt our RHIRL algorithm to this new state trajectory cost measure $\tilde{S}(V, x_0;\theta)$ and the optimal control sequences $\tilde{p_{\sss Q}}(V^*|U_{\sss B}, \Sigma, x_0; \theta)$ under stochastic dynamics. We recall that under the deterministic dynamics, \algo uses importance sampling in equation \eqref{equ:grad} to estimate the $\frac{\partial \mathcal{L}}{\partial \theta}$ so as to update the cost function parameter $\theta$:
\begin{equation*}
    \frac{\partial }{\partial \theta}\mathcal{L}(\theta; D, x_0) \approx \frac{1}{N}\sum_{i=1}^{N}\frac{1}{\lambda}\frac{\partial}{\partial \theta}S(V_i, x_0;\theta) -\frac{1}{M}\sum_{j=1}^{M}\frac{1}{\lambda}w(V_j)\frac{\partial }{\partial \theta}S(V_j, x_0;\theta),
\end{equation*}
whereas the $N$ control sequences in the first term are from the expert demonstration $D_t$ and the $M$ control sequences in the second term are from our approximated optimal control sequence distribution $p_{\sss Q}(V^*)$, and $w(V_j)$ is the importance sampling weight. 

To estimate $\frac{\partial \mathcal{L}}{\partial \theta}$, we need to calculate/approximate the importance sampling weight $w(V)$, and the derivative of the state trajectory cost $\frac{\partial}{\partial \theta}S(V, x_0; \theta)$ w.r.t $\theta$. We recall that the importance sampling weight $w(V)$ depends on the state trajectory cost $S(V, x_0; \theta)$ in equation \eqref{eqn:weights}:
\begin{equation*}
    w(V) \propto \exp(-\frac{1}{\lambda}\bigg( S(V, x_0;\theta)+\lambda\sum_{k=0}^{K-1}u_k^{\intercal}\Sigma^{-1}v_k)\bigg)
\end{equation*}

Under the deterministic dynamics, the importance sampling weight $w(V)$ is estimated using Monte-Carlo approximation with $M$ state trajectory samples as follows:
\begin{align}
    w(V) & \approx \frac{ \exp \bigg(-\frac{1}{\lambda}(S(V, x_0; \theta) + \lambda \sum_{k=0}^{K-1}u_k^{\intercal}\Sigma^{-1}v_k)\bigg)}{ \sum_{j=0}^{M-1} \exp \bigg(-\frac{1}{\lambda}(S(V_j, x_0; \theta) + \lambda \sum_{k=0}^{K-1}u_k^{j\intercal}\Sigma^{-1}v^j_k)\bigg)}\\
    &= \frac{\exp \bigg(-\frac{1}{\lambda}(\sum_{t=0}^{K} g(x_t; \theta) + \lambda \sum_{k=0}^{K-1}u_k^{\intercal}\Sigma^{-1}v_k)\bigg)}{ \sum_{j=0}^{M-1} \exp \bigg(-\frac{1}{\lambda}(\sum_{t=0}^{K} g(x^j_t; \theta) + \lambda \sum_{k=0}^{K-1}u_k^{j\intercal}\Sigma^{-1}v^j_k)\bigg)}
\end{align}

Since the state trajectory cost $S(V, x_0; \theta)$ is a linear sum of the cost of all states, $\frac{\partial S}{\partial \theta}$ can be directly computed as follows:
\begin{equation}
    \frac{\partial}{\partial \theta}S(V, x_0; \theta) = \frac{\partial}{\partial \theta} \sum_{t=0}^K g(x_t;\theta) = \sum_{t=0}^K \frac{\partial}{\partial \theta}g(x_t;\theta)
\end{equation}

To extend RHIRL to stochastic dynamics, when the state trajectory cost function is now $\tilde{S}(V, x_0;\theta)$, we need to redefine how to estimate $\tilde{w}(V)$ and consequently $\frac{\partial \tilde{S}}{\partial \theta}$.

When extend to stochastic dynamic, we have the following:
\begin{equation}
    \tilde{w}(V) \propto \exp(-\frac{1}{\lambda}\bigg(\tilde{S}(V, x_0; \theta)+\lambda\sum_{k=0}^{K-1}u_k^{\intercal}\Sigma^{-1}v_k)\bigg)
\end{equation}
 and we still adopts Monte-carlo sampling to approximate $\tilde{w}(V)$. However, since $\tilde{S}(V, x_0;\theta)$ now measures the expected state trajectory cost under the stochastic dynamics, we need to go one step further and use sampling to estimate $\tilde{S}(V, x_0;\theta)$ using $M^s$ number of state trajectories $\tau_h = (x_0, x_1^h, ... , x_K^h)$ per $(x_0, V)$ pair:
\begin{equation} \label{equ:sample_cost}
    \tilde{S}(V, x_0;\theta) \approx \frac{1}{M^s}\sum_{h = 0}^{M^s-1}\sum_{t = 0}^{K} g(x_t^h;\theta) 
\end{equation}

Therefore, now the importance sampling weight $\tilde{w}(V)$ is approximated from $M \times M^s$ state trajectories, with $M^s$ trajectories from each $(x_0, V_j)$ pair as follows:
\begin{align}
    \tilde{w}(V) & \approx \frac{ \exp \bigg(-\frac{1}{\lambda}(\tilde{S}(V, x_0;\theta) + \lambda \sum_{k=0}^{K-1}u_k^{\intercal}\Sigma^{-1}v_k)\bigg)}{ \sum_{j=0}^{M-1} \exp \bigg(-\frac{1}{\lambda}(\tilde{S}(V_j, x_0;\theta) + \lambda \sum_{k=0}^{K-1}u_k^{j\intercal}\Sigma^{-1}v^j_k)\bigg)}\\
    & \approx \frac{ \exp \bigg(-\frac{1}{\lambda}(\frac{1}{M^s}\sum_{h = 0}^{M^s-1}\sum_{t = 0}^{K} g(x_t^h;\theta)  + \lambda \sum_{k=0}^{K-1}u_k^{\intercal}\Sigma^{-1}v_k)\bigg)}{ \sum_{j=0}^{M-1} \exp \bigg(-\frac{1}{\lambda}(\frac{1}{M^s}\sum_{h = 0}^{M^s-1}\sum_{t = 0}^{K} g(x_t^{j^h};\theta)  + \lambda \sum_{k=0}^{K-1}u_k^{j\intercal}\Sigma^{-1}v^j_k)\bigg)}
\end{align}
We emphasize that under the stochastic dynamics, we use the state trajectory samples to estimate both the state trajectory cost $\tilde{S}(V, x_0;\theta)$ and the importance sampling weight $\tilde{w}(V)$. Since the $\tilde{S}$ now measures the expected cost over a distribution of trajectories, we need more samples to estimate $\tilde{w}(V)$ compared to the deterministic setting.

Moreover, in the final $\frac{\partial \mathcal{L}}{\partial \theta}$, we need to differentiate $\tilde{S}(V, x_0;\theta)$ w.r.t. $\theta$. Since $\tilde{S}$ is estimated from sampling, we have:
\begin{equation} \label{equ:sample_cost_sto}
    \frac{\partial}{\partial \theta}\tilde{S}(V, x_0; \theta) \approx \frac{\partial}{\partial \theta} \frac{1}{M^s}\sum_{h = 0}^{M^s-1}\sum_{t = 0}^{K} g(x_t^h;\theta) \approx  \frac{1}{M^s}\sum_{h = 0}^{M^s-1}\sum_{t = 0}^{K}\frac{\partial}{\partial \theta}g(x^h_t;\theta)
\end{equation}

Finally, we summarize how to extend RHIRL to stochastic dynamics. In stochastic dynamics, each control sequence $V$ will map to a distribution of state trajectory $p(\tau|V, x_0)$. Hence, we adapt our measure of state trajectory cost $S(V, x_0;\theta)$ from a single trajectory to be the expected cost over a distribution of state trajectories $\tilde{S}(V, x_0;\theta)$ in equation (\ref{equ:cost_sto}). Next, we revise the optimal control sequence distribution to a stochastic setting $ \tilde{p_Q}(V^*|U_{\sss B}, \Sigma, x_0; \theta)$ in equation (\ref{equ:stochastic_obj}). More specifically, we show under the stochastic dynamics, the optimal control sequences $V^*$ is chosen based on the expected cost of its resulting state trajectories. Finally, under this new state trajectory cost $\tilde{S}(V, x_0;\theta)$ and the optimal control sequence distribution  $ \tilde{p_Q}(V^*|U_{\sss B}, \Sigma, x_0; \theta)$, we adapt the approximation of the importance sampling weight $\tilde{w}(V)$ and consequently the gradient of the overall loss w.r.t $\theta$ by adding one more sampling process to estimate the new state trajectory cost. Moreover, in practice, we can use the same set of samples to estimate both state trajectory cost $\tilde{S}(V, x_0;\theta)$ and the importance sampling weights $\tilde{w}(V)$.

\section{Experimental Details}\label{Appendix:exp}
In this section, we list down the implementation details of 
\algo and the baselines. The code is included in the supplementary material. We also report the hyperparameters used in the experiments, the detailed network architectures, training procedures and evaluation procedures used for our experiments. 

\subsection{Practical Issues of \algo}

\textbf{Control Noise Covariance Approximation} \label{Appendix:sigma_approximation}

The actual control noise covariance $\noiseMatrix$ is unknown to \algo and the baselines. However, \algo uses the noise covariance matrix $\noiseMatrix$ to sample the controls around the nominal control (Algorithm 1, line 6) and calculate the quadratic control cost in Equation~\eqref{equ:cost_sto}. Since we have no access to the true $\noiseMatrix$, \algo approximates $\noiseMatrix$ as a constant factor of the identity matrix $\beta I$, whereas $\beta$ is the hyperparameter we optimize using grid search and $I$ is the identify matrix with its width equals to the dimension of the action space. Therefore, instead of sampling the controls from  $ \mathcal{N}(V | U,\Sigma)$, we sample from  $ \mathcal{N}(V | U,\beta I)$ in practice. We also use $\beta I$ in Equation~\eqref{equ:cost_sto} to replace the unknown $\noiseMatrix$. Even in the noise-free environment, we set $\beta$ to a non-zero value to foster exploration; otherwise, the importance sampling degenerates to the single nominal control. 

Our experiment shows that \algo is robust to the choice of $\beta$: the cost learning performs well even if $\beta I \neq \noiseMatrix$. This may be attributed to the fact that we jointly optimize the state cost function and $\beta$. Therefore the learned state cost function may compensate for the inaccurate approximation for $\noiseMatrix$.

\textbf{Numerical Stability} 

Equation\eqref{eqn:weights} forms the basis of importance sampling and estimation. However, the learned cost can be a huge negative number, which causes numerical instability in estimating the importance weights. To mitigate this issue, we subtract the minimum trajectory cost $S_{min}$ from all rollouts to improve the numeric stability. Since subtracting the same number from all rollouts does not change the order of the preference, this operation does not affect the optimality of our derivation.

\textbf{Nominal Control Initialization and Local Optimality}

\algo samples around the nominal control sequence to collect the samples for importance sampling. However, if the initial nominal control sequence performs poorly, it is not easy to generate any good samples to improve the current control sequences. To mitigate this problem, we add an exploration strategy to the sampling process in (Algorithm 1, line 6): with probability $\alpha$, we continue with the standard sampling strategy to sample around the nominal control; with probability $1-\alpha$ we sample uniformly from the entire action space. This helps \algo to correct from the unsuitable nominal control initialization and also helps \algo to escape the local optimal solution. We set $\alpha = 0.5$ for all tasks in our experiments.  

\textbf{Control Smoothness}

Updating the optimal control by importance sampling might cause some jerk in the control space. In order to make the control change smoothly in its local space, we apply a Savitzky–Golay filter over the time horizon dimension to constrain the control that does not change too much over the time horizon.  

\subsection{Training}
\label{Appendix:task}
We list the hyper-parameters of \algo for different tasks. These hyper-parameters were selected via grid search. 

\begin{table}[ht]
\centering

\begin{tabular}{llllllll}\toprule
Task   & K & $\beta$  & batch size   & $\lambda$ & lr & weight decay \\
                      \bottomrule
Pendulum-v0                & 20  & 0.8  & 50  & 0.10 & 1e-4 & 8e-5\\
LunarLanderContinuous-v2   & 40  & 0.6  & 200 & 0.10 & 1e-4 & 8e-5\\
Hopper-v2                  & 20  & 0.8  & 100 & 0.10& 1e-4 & 8e-5 \\
Walker2d-v2.               & 30  & 0.6  & 150 & 0.10 & 1e-4 & 8e-5\\
Ant-v2                     & 15  & 1.2  & 200 & 0.10 & 1e-4 & 8e-5  \\
CarRacing-v0               & 15  & 1.0  & 200 & 0.10 & 1e-4 & 8e-5\\
\bottomrule
\end{tabular}
\end{table}

The implementation of the baselines (f-IRL, AIRL and GAIL) are adapted from f-IRL's \citep{fIRL} official repository. We use the hyperparameters reported in f-IRL for the MuJoCo tasks and performed grid search on the hyperparameters for the rest of the tasks. SAC\citep{SAC} is used as the base MaxEnt RL algorithm for both expert policy and the baselines optimization algorithm. We use a tanh squashed Gaussian as the policy network for Pendulum-v0, LunarLander-v2, and the MuJoCo tasks; and we use a Gaussian Convolutional policy as the policy network for CarRacing-v0. The mean and std of the Gaussian are parameterized by a ReLU MLP of size (64, 64). Adam is used as the optimizer. We use the reported SAC temperature,$\alpha = 0.2$, reward scale $c = 0.2$, and gradient penalty coefficient $\lambda = 4.0$. The rest of the hyperparameters for f-IRL, GAIL and AIRL are listed below. 
\begin{table}[ht]
\fontsize{9}{9}\selectfont
\centering
\begin{tabular}{lllllll}\toprule
Task & SAC learning rate & SAC replay buffer size & \makecell{Reward/Value\\ model learning rate} & l2 weight decay \\
\bottomrule
Pendulum-v0              & 1e-4 & 100000 & 1e-5 & 1e-3 \\
LunarlanderContinuous-v2 & 1e-3  & 100000 & 1e-5  & 1e-3\\
Hopper-v2            & 1e-5 & 1000000 & 1e-5  & 1e-3 \\
Walker2d-v2         & 1e-5  & 1000000 & 1e-5  & 1e-3 \\
Ant-v2         & 3e-4 & 1000000  & 1e-4   & 1e-3\\
CarRacing-v0             & 4e-4 & 10000000 & 1e-4  & 1e-3\\
\bottomrule
\end{tabular}
\end{table}

\subsection{Reward Function and Discriminator Network Architectures}
We use the same neural network architecture to parameterize the cost-function/reward-function/discriminator for all methods. For continuous control task with raw state input, i.e. pendulum, lunarlander and the MuJoCo tasks, we use two-layer of MLP with ReLU activation function to parameterized the cost function/discriminator. The hidden size for Pendulum-v0 is (32, 32), and (64, 64) for the rest of the tasks.

For continuous control task with image input, i.e. carracing, we use a four convolutional layer with kernel size $3 \times 3$ as the feature extractor. The output of the CNN layer is vector with size (128,) and is fed into the same reward network as describe above.

\subsection{Additional Experiments Results}
\label{sec:AddResults}

We report the average returns and the standard deviation for \tabref{tab:scalablity} and \tabref{tab:noise} in \tabref{tab:scalability_full_table} and \tabref{tab:noise_full_table} respectively. The mean and standard deviation computed from 3 trials for each entry of the tables.

\begin{table}[htp]
\fontsize{8}{8}\selectfont
\caption{Performance of \algo, f-IRL, GAIL,  and AIRL. We report the mean and the standard deviation of the policy returns using the ground-truth task reward. Higher values indicate better performance. }
\vspace{10px}
\centering
    \begin{tabular}{l@{\hspace*{6pt}}l@{\hspace*{3pt}}ccc}\toprule
        &&\textbf{\makecell{No Noise\\ $\Sigma = 0$}} & \textbf{\makecell{Mild Noise\\$\Sigma = 0.2$}} & \textbf{\makecell{High Noise\\$\Sigma = 0.5$}}\\\midrule
        Pendulum  
            & Expert & -154.69 $\pm$ 50.05 & -156.50 $\pm$ 70.72 & -168.54 $\pm$ 80.89\\
            & \algo & -125.95 $\pm$ 1.21 & -122.33 $\pm$ 3.44 & -132.39 $\pm$ 10.36 \\
             & f-IRL & -121.94 $\pm$ 97.21 & -127.51 $\pm$ 104.55 & -197.36 $\pm$ 106.92\\
             & AIRL & -131.64 $\pm$ 1.16 & -184.62 $\pm$ 88.16 & -203.12 $\pm$ 80.57\\
             & GAIL & -207.05 $\pm$ 57.41 & -207.14 $\pm$ 57.52 & -253.85 $\pm$ 181.84\\
        \midrule
        LunarLander
             & Expert & 235.13 $\pm$ 43.59 & 222.65 $\pm$ 56.35 & 164.52 $\pm$ 36.79\\
             & \algo & 246.39 $\pm$ 10.96 & 233.73 $\pm$ 23.75 & 198.23 $\pm$ 47.8\\
             & f-IRL & 179.03 $\pm$ 9.19 & 141.73 $\pm$ 11.81 & 121.67 $\pm$ 22.77\\
             & AIRL & 174.49 $\pm$ 35.17 & 132.76 $\pm$ 85.59 & 95.61 $\pm$ 19.25\\
             & GAIL & 169.98 $\pm$ 15.43 & 125.5 $\pm$ 16.78 & 100.24 $\pm$ 79.04 \\
        \midrule
        Hopper
            & Expert & 3222.48 $\pm$ 390.65 & 3159.32 $\pm$ 520.00 & 2887.72 $\pm$ 483.93\\
            & \algo & 3071.63 $\pm$ 122.03 & 3121.72 $\pm$ 278.98 & 2776.2 $\pm$ 345.90\\
             & f-IRL & 3080.34 $\pm$ 458.96  & 2580.19 $\pm$ 637.21 & 1270.24 $\pm$ 539.84\\
             & AIRL & 18.9 $\pm$ 0.79 & 33.52 $\pm$ 3.86 & 18.38 $\pm$ 7.84\\
             & GAIL & 2642.59 $\pm$ 187.33 & 1576.25 $\pm$ 1051.98 & 702.33 $\pm$ 151.37\\
             
         \midrule
         Walker2d
            & Expert & 4999.47 $\pm$ 55.99 & 4500.43 $\pm$ 114.48& 3624.48 $\pm$ 95.05\\
            & \algo & 4939.44 $\pm$ 100.28 &4473.332 $\pm$ 324.34 & 3446.55 $\pm$ 507.89\\
             & f-IRL & 4927.92 $\pm$ 529.95 & 3697.36 $\pm$ 711.56 & 2831.91 $\pm$ 993.76\\
             & AIRL & -2.51 $\pm$ 0.69 & 22.24 $\pm$ 10.74 & 6.5 $\pm$ 5.03\\
             & GAIL & 2489.04 $\pm$ 813.31 & 2884.35 $\pm$ 59.88 & 1840.62 $\pm$ 778.3\\
             
        \midrule
        Ant 
            & Expert & 5759.22 $\pm$ 173.57 &  2557.37 $\pm$ 501.95 & 252.62 $\pm$ 91.44\\
            & \algo & 4987.67 $\pm$ 149.2 & 2373.32 $\pm$ 529.3 & 230.8 $\pm$ 253.39\\
             & f-IRL & 5022.42 $\pm$ 108.07 & 2034.87 $\pm$ 262.29 & 197.2 $\pm$ 200.45\\
             & AIRL & 1000.4 $\pm$ 0.79 & 849.05 $\pm$ 30.15 & -7.43 $\pm$ 6.01\\
             & GAIL & 2784.87 $\pm$ 301.66 & 1022.04 $\pm$ 580.49 & -416.69 $\pm$ 292.23\\
             
        \midrule
        CarRacing 
        & Expert & 903.25 $\pm$ 0.23 & 702.01 $\pm$ 0.3 & 281.12 $\pm$ 0.34\\
        & \algo & 359.61 $\pm$ 40.32 & 206.21 $\pm$ 19.87  & 53.97 $\pm$ 3.24\\
             & f-IRL & 85.45 $\pm$ 47.4 & 18.32 $\pm$ 27.89 & 2.04 $\pm$ 13.8\\
             & AIRL & -21.97 $\pm$ 2.67 & -25.25 $\pm$ 5.98 & -32.31 $\pm$ 7.43\\
             & GAIL & 2.62 $\pm$ 3.41 & -7.65 $\pm$ 4.77 & -15.88 $\pm$ 5.89\\
             
        \bottomrule
    \end{tabular}\label{tab:scalability_full_table}
\end{table}

\begin{table}[htp]
\fontsize{8}{8}\selectfont
\caption{Generalization of learned cost functions over different noise levels.}
\vspace*{10pt}
\centering
\begin{tabular}{lcccc}\toprule
 & \multirow{2}{*}{}{}& \textbf{\makecell{Noise-free for learning }} & \multicolumn{2}{l}{\textbf{\makecell{Noise Level $\Sigma$ for Testing }}} \\
 && & \textrm{0.2} & \textrm{0.5}    \\
                          \midrule

Pendulum 
            & \algo  & -125.95 $\pm$ 1.21 &-125.01 $\pm$ 4.53 & -126.4 $\pm$ 7.73\\ 
            & f-IRL   & -121.94 $\pm$ 97.21 & -199.44 $\pm$ 96.99 & -220.74$\pm$79.75 \\
            & AIRL  & -131.64 $\pm$ 1.16 &  -247.86$\pm$11.44 &-304.48$\pm$20.78 \\
            & GAIL & -207.05 $\pm$ 57.41 & -220.6$\pm$ 69.82 &  -270.81$\pm$ 79.68 \\ 
                          \midrule
LunarLander 
            & \algo & 246.39 $\pm$ 10.96 & 205.66$\pm$24.67 &175.82$\pm$52.12\\
            & f-IRL  & 179.03 $\pm$ 9.19 & 121.80$\pm$20.94  & 102.06$\pm$22.31\\
            & AIRL  & 174.49 $\pm$ 35.17 & 31.46$\pm$9.68 &22.29$\pm$14.01\\
             & GAIL  & 169.98 $\pm$ 15.43 & 101.80 $\pm$ 23.12 & 78.33$\pm$ 24.15\\
                          \midrule
Hopper 
            & \algo  & 3071.63 $\pm$ 122.03 & 2577.28 $\pm$409.33 &2152.08 $\pm$342.21\\
            & f-IRL  & 3080.34 $\pm$ 458.96 & 2110.52$\pm$26.71 & 1984.29$\pm$31.88\\
            & AIRL   & 18.9 $\pm$ 0.79 & 18.86$\pm$4.80 & 8.78$\pm$10.89 \\
            & GAIL   & 2642.59 $\pm$ 187.33 & 215.29$\pm$ 27.76  & 132.15$\pm$ 30.20\\
                          \midrule
Walker2d
            & \algo   & 4939.44 $\pm$ 100.28 & 4039.44$\pm$39.2 & 3440.23$\pm$531.08\\
            & f-IRL   & 4927.92 $\pm$ 529.95 & 2976.66$\pm$396.57  & 1090.11$\pm$1389.56\\
            & AIRL  & -2.51 $\pm$ 0.69 & 1380.84$\pm$364.95 & 1787.15$\pm$230.94 \\
            & GAIL  & 2489.04 $\pm$ 813.31 & 103.15$\pm$ 121.84  & 124.15$\pm$82.84 \\
                          \midrule
Ant
            & \algo  & 4987.67 $\pm$ 149.2 & 3192.82$\pm$162.12	& 867.08$\pm$204.28\\
            & f-IRL  & 5022.42 $\pm$ 108.07 & 2042.41$\pm$129.89 & 472.77$\pm$110.2\\
            & AIRL   & 1000.4 $\pm$ 0.79 & 845.69 $\pm$29.01 & 0.69 $\pm$20.49 \\
            & GAIL   & 2784.87 $\pm$ 301.66 & -6.41$\pm$ 21.17 & -79.89$\pm$ 142.43\\
             \midrule
CarRacing
            & \algo & 359.61 $\pm$ 40.32 & 261.78$\pm$54.44	& 110.12$\pm$58.90\\
            & f-IRL & 85.45 $\pm$ 47.4 & 16.12$\pm$67.82 & -24.78$\pm$2.12\\
            & AIRL  & -21.97 $\pm$ 2.67 & -27.09$\pm$6.65 & -23.96$\pm$4.11  \\
            & GAIL  & 2.62 $\pm$ 3.41 & -6.41$\pm$3.22 & -49.89$\pm$7.98 \\
\bottomrule
\end{tabular}\label{tab:noise_full_table}
\end{table}
\end{document}